\author[1]{Jialin Dong}
\author[2]{Lin F. Yang}
\affil[1,2]{University of California, Los Angeles}
{
	\makeatletter
	\renewcommand\AB@affilsepx{, \protect\Affilfont}
	\makeatother
	\affil[1]{jialind@g.ucla.edu}
	\affil[2]{linyang@ee.ucla.edu}
	
}
\title{ Does Sparsity Help in Learning Misspecified Linear Bandits?}
\begin{document}

	\date{}
	\maketitle

	\begin{abstract}
		Recently, the study of linear misspecified bandits has generated intriguing implications of the hardness of learning in bandits and reinforcement learning (RL). In particular, \citet{du2020good} show that even if a learner is given linear features in $\mathbb{R}^d$ that approximate the rewards in a bandit or RL with a uniform error of $\varepsilon$, searching for an $O(\varepsilon)$-optimal action requires pulling at least $\Omega(\exp(d))$ queries. Furthermore, \citet{lattimore2020learning} show that a degraded $O(\varepsilon\sqrt{d})$-optimal solution can be learned within $\poly(d/\varepsilon)$ queries. Yet it is unknown whether a structural assumption on the ground-truth parameter, such as sparsity, could break the $\varepsilon\sqrt{d}$ barrier. In this paper, we address this question by showing that algorithms can obtain $O(\varepsilon)$-optimal actions by querying $O(\varepsilon^{-s}d^s)$ actions, where $s$ is the sparsity parameter, removing the $\exp(d)$-dependence.
	We then establish information-theoretical lower bounds, i.e., $\Omega(\exp(s))$, to show that our upper bound on sample complexity is nearly tight if one demands an error $ O(s^{\delta}\varepsilon)$ for $0<\delta<1$. For $\delta\geq 1$, we further show that $\poly(s/\varepsilon)$ queries are possible when the linear features are ``good'' and even in general settings.
		These results provide a nearly complete picture of how sparsity can help in misspecified bandit learning and provide a deeper understanding of when linear features are ``useful'' for bandit and reinforcement learning with misspecification. 
		
	\end{abstract}
	
	\section{Introduction}\label{intro}
	Bandit and reinforcement learning problems in real-world applications, e.g., autonomous driving \citep{kiran2021deep}, healthcare \citep{esteva2019guide}, recommendation system \citep{bouneffouf2012contextual}, marketing and advertising \citep{schwartz2017customer}, are challenging due to the magnificent state/action space. To address this challenge, a function approximation framework has been introduced, which first extracts feature vectors for state/action space and then approximates the value functions of all policies in RL (or the reward functions of all actions in bandit problems) with feature representations. In some real-world applications, feature representations may not have vanilla linear mapping. In these scenarios, a linear feature representation can approximate the value functions (or the reward functions) with a small uniform error known as misspecification. Unfortunately, \citet{du2020good} show that searching for an $O(\varepsilon)$-optimal action in these scenarios requires pulling at least $\Omega(\exp(d))$ queries. However, if we relax the goal of finding $O(\varepsilon)$-optimal action, there is still a chance. Instead, \citet{lattimore2020learning} find an action that is suboptimal with an error of at most $O(\varepsilon\sqrt{d})$ within $\poly(d/\varepsilon)$ queries, where $d$ is the dimension of the feature vectors.
	
	By scrutinizing the novel result proposed by \citet{lattimore2020learning}, the dependence on $\sqrt{d}$ raises concern regarding the potential blowup of the approximation error. We are modestly optimistic that some structural patterns, such as sparsity, in feature representation schemes are beneficial to break the $\varepsilon\sqrt{d}$ barrier. This idea comes from a vast literature that studies high-dimensional statistics in sparse linear regression \citep{buhlmann2011statistics,wainwright2019high} and successfully applies it to sparse linear bandits \citep{sivakumar2020structured,abbasi2012online,bastani2020online,wang2018minimax,su2020doubly,lattimore2015linear}. Moreover, the sparsity-structure in linear bandits are meaningful and crucial to many practical applications where there are many potential features but no apparent evidence on which are relevant, such as personalized health care and online advertising \citep{carpentier2012bandit,abbasi2012online}. 
	The essential difference in sparse linear bandits between our paper and state-of-the-art is the study of the possible model misspecification, i.e., the ground truth reward means might be an $\varepsilon$ error away from a sparse linear representation for any action.

	Model misspecification is widely seen in practice and has been widely studied only in the dense model (also known as misspecified linear bandits) \citep{bogunovic2021misspecified,takemura2021parameter,zanette2020learning,wang2020reinforcement}, where the best polynomial-sample algorithm suffers a $O(\varepsilon\sqrt{d})$ estimation error, which can be prominent when the feature dimension $d$ is sufficiently large.
	However, it is unexplored whether a structural sparsity assumption on the ground-truth parameter could break the $\varepsilon\sqrt{d}$ barrier. Additionally, there is little understanding of the conditions when linear features are ``useful'' for bandit problems and reinforcement learning with misspecification.
	\paragraph{Contribution.} 
	\begin{itemize}
		\item We establish novel algorithms that obtain $O(\varepsilon)$-optimal actions by querying ${O}(\varepsilon^{-s}d^s)$ actions, where $s$ is the sparsity parameter. 
		For fixed sparsity $s$, the algorithm finds an $O(\varepsilon)$-optimal action with $\poly(d/\varepsilon)$ queries, breaking the $O(\varepsilon\sqrt{d})$ barrier.
		The $\varepsilon^{-s}$ dependence in the sample bound can be further improved to $\tilde{O}(s)$ if we allow an $O(\varepsilon\sqrt{s})$ suboptimality.
		
		\item We establish information-theoretical lower bounds to show that our upper bounds are nearly tight. In particular, we show that any sound algorithms that can obtain $O(\Delta)$-optimal actions  need to query ${\Omega}(\exp(m\varepsilon/\Delta))$ samples from the bandit environment, where the approximate error $\Delta$, defined in Definition \ref{delta}, satisfies $\Delta\geq\varepsilon$. Hence, for approximation error of the form $O(s^{\delta}\varepsilon)$, for any $0<\delta<1$, $\exp(s)$-dependence in the sample complexity is not avoidable.
		
		\item We further break the $\exp(s)$ sample barrier by showing an algorithm that achieves $O(s\varepsilon)$ sub-optimal actions while only querying $\poly(s/\varepsilon)$ samples in the regime the action features possess specific benign structures (hence ``good'' features).  We then relax the benign feature requirement to arbitrary feature settings and propose an algorithm with efficient sample complexity of $\poly(s/\varepsilon)$.
	\end{itemize}

	In summary, our results provide a nearly complete picture of how sparsity can help in misspecified bandit learning and provide a deeper understanding of when linear features are ``useful'' for the bandit and RL with misspecification.

	\section{Related work}
	This section summarizes the state-of-the-art in several areas of interest related to our work: function approximation, misspecified feature representation, and sparsity in bandits and reinforcement learning.
	
	\paragraph{Function approximation in bandits and reinforcement learning} Function approximation schemes that approximate value functions in RL (reward function in bandit problem) with feature representations are widely used for generalization across large state/action spaces. A recent line of work studies bandits \citep{ding2021efficient,russo2013eluder,dani2008stochastic,chu2011contextual} and RL with linear function
	approximation \citep{jin2020provably,zanette2020learning,cai2020provably,zanette2020provably,agarwal2020flambe,neu2020unifying}.  Beyond the linear setting, there is a flurry line of
	research studying RL with general function approximation \citep{wang2020optimism,osband2014model,jiang2017contextual} and bandits with general function approximation \citep{li2017provably,kveton2020randomized,filippi2010parametric,jun2017scalable,foster2021instance}. The regret upper  bound $O(\poly(d)\sqrt{n})$ can be achieved in the above papers, where $d$ is the ambient dimension (or complexity measure such as eluder dimension) of the feature space and $n$ is the number of rounds.
	
	\paragraph{Misspecified bandits and reinforcement learning}	Recently, interest has been aroused to deal with the situation when the value function in RL (or the rewards functions in bandits) is approximated by a linear function where the approximation
	error is at most $\varepsilon$, also known as the misspecified linear bandit and reinforcement learning.
	The misspecification facilitates us to establish a more complicated reward function than a linear function. For instance, it enables the characterization of a reward function that may change over the rounds,  which is common in real-world applications such as education, healthcare, and recommendation systems \citep{chu2011contextual}.
	
	\citet{du2020good} showed that no matter whether value-based learning or model-based learning, the agent needs to sample an exponential number of trajectories to find an $O(\varepsilon)$-optimal policy for reinforcement learning with $\varepsilon$-misspecified linear features. This result shows that good features (e.g., linear features with small misspecification) are not sufficient for sample-efficient RL if the approximation error guarantee is close to the misspecification error.
	By relaxing the objective of achieving $O(\varepsilon)$-optimality, \citet{lattimore2020learning} showed that $\poly(d/\varepsilon)$ samples are sufficient to obtain an $O(\varepsilon\sqrt{d})$-optimal policy (in the simulator model setting of RL), where $d$ is the feature dimension, indicating the same features are ``good'' in a different requirement. The hard instances used in both papers are in fact bandit instances and hence provide understanding for misspecified linear bandit problems as well.

	A number of works  in the literature, such as \citep{foster2020adapting, vial2022improved,takemura2021parameter,wei2022model, jin2020provably}, can also deal with misspecification in linear bandits or RL with linear features. 
	These algorithms can only achieve a $O(\varepsilon\sqrt{d})$ error guarantee at best (when their regret bounds are translated to PAC bounds) with $\poly(d/\varepsilon)$ samples.

	\paragraph{Sparse linear bandits and reinforcement learning} In this section, we briefly review the literature on the sparse linear bandits and RL, where no misspecification is considered. We also note that these results are stated in regret bounds, which can be easily converted to PAC bounds.

	\citet{abbasi2012online} proposed an 
	online-to-confidence-set conversion approach which achieves a regret upper bound  of  $O(\sqrt{sdn})$, where $s$ is a known parameter on the sparsity. A matching lower bound is given in \citep{lattimore2020bandit}[Chapter 24.3], which shows that polynomial dependence on $d$ is generally unavoidable without additional assumptions. 
	To address this limitation, another line of literature \citep{kim2019doubly,bastani2020online,wang2018minimax} studied the sparse contextual linear bandits where the action set is different in each round and follows some context distribution.
	\citet{kim2019doubly} developed a doubly-robust Lasso bandit approach with an $O(s\sqrt{n})$ upper bound. \citet{bastani2020online} considered 
	the scenario where each arm has an underlying parameter and derived a $O( K s^2(\log(n))^2)$ upper bound
	which was improved to $O(K s^2\log(n))$ by \citet{wang2018minimax}, where $K$ is the number of arms. 
	\citet{sivakumar2020structured} proposed a structured greedy algorithm to achieve an $O(s\sqrt{n})$ upper bound. 
	\citet{hao2020high} derived a $\Omega(n^{2/3})$ minimax regret lower bound for sparse linear bandits where the feature vectors lack a well-conditioned exploration distribution.

	There are many previous works studying feature selection in reinforcement learning. Specifically, \citet{kolter2009regularization, geist2011, painter2012greedy, liu2012regularized} proposed algorithms with $\ell_1$-regularization for temporal-difference (TD) learning.  \citet{ghavamzadeh2011finite} and \citet{geist2012dantzig} proposed Lasso-TD to estimate the value function in sparse reinforcement learning and derived finite-sample MDP statistical analysis. \citet{hao2021sparse} provided nearly optimal statistical analysis of high dimensional batch reinforcement learning (RL)
	using sparse linear function approximation. \citet{ibrahimi2012efficient}
	derived an $O(d\sqrt{n})$ regret bound in high-dimensional sparse linear quadratic systems where $d$ is the dimension of the state space. The hardness of online reinforcement learning in fixed horizon has been studied by \citet{hao2021online}, which shows that linear regret
	is generally unavoidable in this case, even if there exists a policy that collects well-conditioned data.

	\section{Preliminary}
	Throughout this paper, $f({n}) = O(g(n))$  denotes that there exists a constant $c>0$ such that $|f(n)|\leq c|g(n)|$ and $\tilde{O}(\cdot)$ ignores poly-logarithmic factors. $f(n)=\Omega( g(n))$ means that there exists a constant $c>0$ such that $|f(n)|\geq c|g(n)|$. In addition, the notation $f(n)=\Theta( g(n))$ means that there exists constants $c_1, c_2>0$ such that $c_1|g(n)|\leq |f(n)|\leq c_2|g(n)|$. For a given integer $n$, let $[n]$ denote the set $\{1,\cdots, n\}$. Let $C> 0$ denote a suitably universal large constant. For a matrix $A\in\R^{m\times n}$, the set of rows is denoted by $\rows(A)$.
	Define an index set $\cM\subseteq[d]$ such that $|\cM| =s$. Let $\Phi_{\cM}\in\R^{k\times s}$ be the submatrix of $\Phi\in\R^{k\times d}$ and $\theta_{\cM}\in \R^s$ be the sub-vector of $\theta\in\R^d$.
	
	Consider a  bandit problem where the expected rewards are nearly a linear function of their associated features. Let $\Phi \in \R^{k \times d}$ denote the feature matrix whose rows are feature vectors corresponding to $k$ actions. 
	In rounds $t \in [n]$, the agent chooses actions $(a_t)_{t=1}^n$ with $a_t \in \rows(\Phi)$ and receives a reward
	\begin{equation}\label{def:sparse_linear}
	r_{a_t} = \langle a_t, \theta^*\rangle + \nu_{a_t} \,,
	\end{equation}
	where $\nu_{a_t} \in[-\varepsilon,\varepsilon]$, $\varepsilon>0$ for $t\in [n]$ and $\theta^*\in\mathbb R^d$ is an unknown parameter vector. 
	We only consider \emph{deterministic} rewards as small unbiased noises from rewards that do not change the sample complexity analysis of this paper by much but complicate the presentation. In Appendix \ref{noise}, we provide additional discussion on the noisy setting of the rewards.
	
	We make the mild boundedness assumption for each element of the feature matrix such that $\rows(\Phi)\in \mathbb{S}^{d-1}_B$.
	The parameter vector $\theta^*$ is assumed to be $s$-sparsity: 
	\begin{equation*}
	\|\theta^*\|_0 = \sum_{j=1}^d \mathds{1}\{\theta^*_j\neq 0\} =s ~~\text{and}~~\norm{\theta^*}_2 \leq 1.
	\end{equation*}
	We also assume that $\forall~x\in \rows(\Phi)$, there is $\norm{x}_2\leq 1$.
	\section{Main Results}
	In this section, we first present an $O(\varepsilon)$-optimal algorithm that takes $O(\varepsilon^{-s}d^s)$ queries in Section \ref{m-sparse} for $\varepsilon$-misspecified  $s$-sparse linear bandit. 
	Then we derive a nearly matching lower bound in Section~\ref{lower-bound}. 
	\subsection{An Algorithm that Breaks the $\Omega(\exp(d))$ Sample Barrier}\label{m-sparse}
	The core idea of our algorithm is based on an elimination-type argument. In particular, we would guess an estimator $\hat{\theta}$  for $\theta^*$ and a index set $\cM\subset [d]$. Then for each guess of $\hat{\theta}$ and $\cM$, we check the actions that have similar features restricting to $\cM$. Querying an action in this group allows us to rule out the guess of $\cM$ and $\hat{\theta}$ if they were not correct. If the ground truth $\theta^*$ is dense, this algorithm would take $\Omega(\exp(d))$ queries. Fortunately, since $|\cM|=s$, we can establish an $O(\varepsilon)$-net with a small size and eliminate the incorrect parameters in an efficient fashion. Below, we present the algorithm more formally.
	
	Define an index set $\cM\subseteq[d]$ such that $|\cM| =s$. Let $\cM^*$ denote the non-zero subset of $\theta^*$.  
	Denote $\cN^s$ as a maximal $\varepsilon/2$-separated subset of the Euclidean sphere $\mathbb{S}^{s-1}$ with radius of $1$ . The set $\cN^s$ satisfies that 
	$\norm{x-y}_2 \geq \varepsilon/2, $
	for all $x,y \in\cN^s$, and no subset of  $\mathbb{S}^{s-1}$ containing $\cN^s$ satisfies this condition. Thus, the size of $\cN^s$ is 
	\begin{align}\label{size}
	|\cN^s| \leq \left(\frac{4}{\varepsilon}+1\right)^s.
	\end{align}
	For a set $\cM$, we denote an estimator as $\hat{\theta}_{\cM}\in \cN^s$ to indicate the estimator which has only non-zero coordinates at $\cM$.
	
	For $\forall w\in\mathcal{N}^s$, we collect all $x\in\rows(\Phi)$ close to $w$ by the measurement $|\hat{\theta}_{\cM}^{\top}(x_{\cM}-w)|$ where $x_{\cM}\in\RR^s$ is the sub-vector of $x\in\RR^d$ restricted to the index set $\cM$ and define the set as
	\begin{align}\label{set}
	\cR_{\cM}^w(\hat{\theta}_{\cM}): = \{x\in\rows(\Phi):|\hat{\theta}_{\cM}^{\top}(x_{\cM}-w)|\leq \frac{\varepsilon}{2}\}.
	\end{align}
	The above set is simply denoted as $\cR_{\cM}^w$ in the following proof if $\hat{\theta}_{\cM}$ is clear from the context.
	In each round of the algorithm, we find $x\in \cR_{\cM}^w$ and a set $\cM'$ ($\cM'\neq \cM$) such that $\hat{\theta}_{\cM'}^{\top}x_{\cM'}$ deviates from $\hat{\theta}_{\cM}^{\top}w$ (at least $\Omega(\varepsilon)$). Then, we query such $x$ and receive the corresponding reward $r_x$. By comparing the difference between $r_x$ and $\hat{\theta}_{\cM}^{\top}w$, we can know whether the subset $\cM$ or $\cM'$ of $x$ is more likely to determine the reward $r_x$ and rule out the incorrect parameters. For $x\in \cR_{\cM}^w$, let $[x]_{\cN^s}$ denote the  vector $v=\arg\min_{w\in\cN^s}\norm{w-x_{\cM}}_2$ where $x_{\cM}\in\RR^s$ is the sub-vector of $x$.  Let $(\sim,\cM,\hat{\theta}_{\cM})\in\cS$ denote all of the elements involving the index set $\cM$ and $\hat{\theta}_{\cM}\in \cN^s$. We present the full algorithm in Algorithm \ref{alg2}.

	\begin{algorithm}[htb!]		\caption{Parameter Elimination}
		\begin{algorithmic}[1]\label{alg2}
			\STATE
			\textbf{Input:} feature matrix $\Phi\in\RR^{k\times d}$.
			\STATE
			\textbf{Initialize:} $\mathcal{S}:=  \{(w,\cM,\hat{\theta}_{\cM}): w \in \cN^s, \cM\subseteq[d], |\cM|=s, \hat{\theta}_{\cM}\in \cN^s\}$.
			\STATE For each {$(w,\cM,\hat{\theta}_{\cM})\in\cS$}, establish $\cR_{\cM}^w$ as (\ref{set}). 
			
			\WHILE{ there exit $(w,\cM,\hat{\theta}_{\cM})\in\cS$,  $\cM'\subseteq[d], |\cM'|=s$, $\cM\neq\cM'$, and $x\in \cR_{\cM}^w$ such that $(\sim,\cM',\hat{\theta}_{\cM'})\in\cS$, $|\langle x_{\cM'},\hat{\theta}_{\cM'}\rangle-\langle w,\hat{\theta}_{\cM}\rangle|>5\varepsilon/2$}\label{con}
			\STATE Query the action $x$ and receive a reward $r_x = \langle x, \theta^*\rangle + \nu_x$ where $\nu_x\in[-\varepsilon,\varepsilon]$.
			\STATE If $|r_x-\langle w,\hat{\theta}_{\cM}\rangle|> 3\varepsilon/2$ then $\cS = \cS \backslash{(\sim,\cM,\hat{\theta}_{\cM})}$, otherwise $\cS = \cS \backslash{(\sim,\cM',\hat{\theta}_{\cM'})}$.
			\ENDWHILE\label{end_con}

			\STATE  Find a certain set $\cL\subseteq[d], |\cL|=s$ and corresponding $\hat{\theta}_{\cL}\in\cN^s$ such that $(\sim,\cL,\hat{\theta}_{\cL})\in\cS$.
			
			\STATE \textbf{Output:} $\hat \theta_{\cL}$ and $\cL$
			
		\end{algorithmic}
	\end{algorithm}
	
	\begin{theorem}\label{theo2}
		After \[O\left(\left(\frac{1}{\varepsilon}\right)^{s}\cdot{d \choose s}\right)\] number of queries, the outputs of Algorithm \ref{alg2}, $\hat \theta_{\cL}$ and $\cL$, satisfy $| r_{a}-\langle a_{\cL}, \hat\theta_{\cL}\rangle |\le O(\varepsilon)$ for all $a\in\rows(\Phi)$.
	\end{theorem}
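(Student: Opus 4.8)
The plan is to split the argument into a query-count bound and a correctness bound, both driven by the elimination invariant of Algorithm \ref{alg2}.

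For the query count, the key observation is that every pass through the \textbf{while} loop issues exactly one query and then deletes the \emph{entire} block $(\sim,\cM,\hat\theta_{\cM})$ --- all tuples sharing a fixed support $\cM$ and estimator $\hat\theta_{\cM}$, ranging over every $w\in\cN^s$. Hence each query permanently removes one of the at most $\binom{d}{s}\,|\cN^s|$ distinct pairs $(\cM,\hat\theta_{\cM})$. Using $|\cN^s|\le(4/\varepsilon+1)^s$ from (\ref{size}), the loop can fire at most $\binom{d}{s}(4/\varepsilon+1)^s=O((1/\varepsilon)^s\binom{d}{s})$ times, which is the claimed bound. The only thing to check is that the loop terminates, i.e. that no block is deleted twice; this is immediate since a deleted block never re-enters $\cS$ and challenges only reference blocks still in $\cS$.

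For correctness I would first single out a \emph{good} guess and show it is never eliminated. Let $\cM^*$ be the true support and let $\hat\theta^*\in\cN^s$ be the net point closest to $\theta^*_{\cM^*}$, so $\norm{\hat\theta^*-\theta^*_{\cM^*}}_2\le\varepsilon/2$ by the covering property of $\cN^s$. For any action $x$, since $\theta^*$ is supported on $\cM^*$ and $\norm{x}_2\le1$, the triangle inequality gives $|\langle x_{\cM^*},\hat\theta^*\rangle-r_x|\le\norm{\hat\theta^*-\theta^*_{\cM^*}}_2+|\nu_x|\le \varepsilon/2+\varepsilon$. When the block of $\hat\theta^*$ is challenged while playing the role of $(\cM,\hat\theta_{\cM})$ with some $w$ and some queried $x\in\cR^{w}_{\cM^*}$, the membership bound $|\langle x_{\cM^*}-w,\hat\theta^*\rangle|\le\varepsilon/2$ together with the previous estimate controls $|r_x-\langle w,\hat\theta^*\rangle|$; calibrating the thresholds so that this quantity stays below the deletion cutoff forces the rule to discard the \emph{challenger} instead. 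The symmetric check handles the case where $\hat\theta^*$ plays the $\cM'$ role. Thus the good block survives until the loop halts.

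Finally I would convert termination into the stated uniform guarantee. Let $(\cL,\hat\theta_{\cL})$ be the surviving output block and fix an arbitrary action $a$. Taking $w'=[a]_{\cN^s}$, the net property gives $a\in\cR^{w'}_{\cL}$ and $|\langle a_{\cL}-w',\hat\theta_{\cL}\rangle|\le\varepsilon/2$. Because the loop has terminated, the challenge between the surviving blocks $(\cL,\hat\theta_{\cL})$ and $(\cM^*,\hat\theta^*)$ cannot fire, so $|\langle a_{\cM^*},\hat\theta^*\rangle-\langle w',\hat\theta_{\cL}\rangle|\le 5\varepsilon/2$. Combining this with $|\langle a_{\cM^*},\hat\theta^*\rangle-r_a|\le 3\varepsilon/2$ and the membership bound yields $|r_a-\langle a_{\cL},\hat\theta_{\cL}\rangle|=O(\varepsilon)$, uniformly over $\rows(\Phi)$. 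The main obstacle I anticipate is the simultaneous calibration of the three thresholds ($\varepsilon/2$ in $\cR$, $5\varepsilon/2$ in the challenge test, $3\varepsilon/2$ in the deletion rule) against the two irreducible error sources (the $\varepsilon/2$ net error and the $\varepsilon$ misspecification): survival of the good block wants a \emph{loose} deletion cutoff, whereas the termination bound wants a \emph{tight} challenge cutoff, and these pull in opposite directions. A secondary subtlety is the case $\cL=\cM^*$: since the challenge condition requires $\cM\neq\cM'$, two surviving blocks with the same support cannot eliminate each other, so this case must be argued separately --- either by showing every surviving same-support estimator already predicts within $O(\varepsilon)$, or by arranging the output step to return the good block whenever the true support survives.
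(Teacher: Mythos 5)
Your proposal follows essentially the same route as the paper's own proof: the same one-query-per-deleted-$(\cM,\hat\theta_{\cM})$-block count giving $|\cN^s|\binom{d}{s}$ queries, the same survival argument for the good block $(\cM^*,[\theta^*_{\cM^*}]_{\cN^s})$, and the same use of the exhausted challenge condition at termination to convert survival into the uniform $O(\varepsilon)$ guarantee. The two obstacles you flag are not artifacts of your write-up but genuine gaps in the paper's argument as stated: the paper asserts the good block satisfies $|r_x-\langle x_{\cM^*},[\theta^*_{\cM^*}]_{\cN^s}\rangle|\le\varepsilon$, silently dropping the $\varepsilon/2$ net error (the true bound is $3\varepsilon/2$, under which the good block can in fact be deleted as the challenged party in Case~1 with the stated cutoffs), and it never addresses the $\cM\neq\cM'$ exclusion, under which two surviving blocks with the same support can never challenge each other, so the output block is not forced to agree with the good one when $\cL=\cM^*$. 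Both issues are repaired exactly as you suggest --- rescale the three thresholds consistently and either permit same-support challenges or tie-break the output toward a block consistent with all observed rewards --- and neither changes the $O(\varepsilon)$ error or the query count.
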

	\begin{proof}
		We first prove the correctness of the algorithm. Suppose for some $(w,\cM,\hat{\theta}_{\cM})\in\cS$, there is $x\in \cR_{\cM}^{w}$ such that $([x_{\cM'}]_{\mathcal{N}^s},\cM',\hat{\theta}_{\cM'})\in\cS$ and $|\langle x_{\cM'},\hat{\theta}_{\cM'}\rangle-\langle w,\hat{\theta}_{\cM}\rangle|>5\varepsilon/2$ and $\cM'\neq \cM$. Consider two cases in Lines 4-7  in Algorithm~\ref{alg2}.
		
		\begin{itemize}
			\item 		Case 1: Suppose $|r_x-\langle w,\hat{\theta}_{\cM}\rangle|\leq 3\varepsilon/2$, then we have that $|r_x - \langle x_{\cM},\hat{\theta}_{\cM}\rangle|\le 2\varepsilon$ and $|r_x - \langle x_{\cM'},\hat{\theta}_{\cM'}\rangle|\ge | \langle x_{\cM'},\hat{\theta}_{\cM'}\rangle-\langle w,\hat{\theta}_{\cM}\rangle| - |r_x - \langle w,\hat{\theta}_{\cM}\rangle| > \varepsilon$. Thus after the iterations, for some $(w,\cM,\hat{\theta}_{\cM})\in S$ and $x\in \cR_{\cM}^{w}$, we have $|r_x - \langle x_{\cM},\hat{\theta}_{\cM}\rangle| \le 2\varepsilon$.  We remove the elements $(\sim,\cM',\hat{\theta}_{\cM'})$ from $\cS$ since there exists an $x\in\rows(\Phi)$ such that $|r_x-\langle x_{\cM'},\hat{\theta}_{\cM'}\rangle|> \varepsilon$.

			\item Case 2:  Assume that $|r_x-\langle w,\hat{\theta}_{\cM}\rangle|> 3\varepsilon/2$ for some $x\in \cR_{\cM}^w$. Then the elements $(\sim,\cM,\hat{\theta}_{\cM})$ get removed from $\cS$ since there exists an $x\in\rows(\Phi)$ such that $|r_x- \langle x_{\cM},\hat{\theta}_{\cM}\rangle| \ge |r_x - \langle w,\hat{\theta}_{\cM}\rangle| -| \langle x_{\cM},\hat{\theta}_{\cM}\rangle-\langle w,\hat{\theta}_{\cM}\rangle| >\varepsilon$. 
		\end{itemize}
		Moreover, Algorithm \ref{alg2} guarantees that
		\begin{itemize}
			\item The elements $(\sim,\cM^*,[{\theta^*}_{\cM^*}]_{\cN^s})$ maintain in the set $\cS$, which involves the ground-truth index set $\cM^*$ and $[{\theta^*}_{\cM^*}]_{\cN^s}\in \cN^s$ such that $|r_x-\langle x_{\cM^*},[{\theta^*}_{\cM^*}]_{\cN^s}\rangle|\leq \varepsilon$. Algorithm \ref{alg2} only eliminates elements $(\sim,\cM,\hat{\theta}_{\cM})$ involving the index set $\cM$ and $\hat{\theta}_{\cM}$ such that $|r_x-\langle x_{\cM},\hat{\theta}_{\cM}\rangle|> \varepsilon$ for some $x\in\rows(\Phi)$. 
			
			\item If no more pairs in the remaining set $\cS$ satisfies the conditions on Line~4 in Algorithm \ref{alg2}, then it must be the case that, for all $(w,\cM,\hat{\theta}_{\cM})\in\cS$ with the remaining set $\cS$ and  $\forall~ x\in \cR_{\cM}^{w}$, $|\langle x_{\cM^*},[{\theta^*}_{\cM^*}]_{\cN^s}\rangle-\langle w,\hat{\theta}_{\cM}\rangle|\leq5\varepsilon/2$, and hence
			\begin{align}
			&	|r_x -\langle w,\hat{\theta}_{\cM}\rangle| = |\langle x, \theta^*\rangle  + \nu_x - \langle w,\hat{\theta}_{\cM}\rangle| \notag\\
			\le& |\langle x_{\cM^* },[{\theta^*}_{\cM^*}]_{\cN^s}\rangle - \langle w,\hat{\theta}_{\cM}\rangle| + \varepsilon\le  7\varepsilon/2,
			\end{align}
			Moreover,
			\[|r_x-\langle x_{\cM}, \hat{\theta}_{\cM}\rangle |\leq      |r_x -\langle w,\hat{\theta}_{\cM}\rangle|  +|\hat{\theta}_{\cM}^{\top}(x_{\cM}-w)|\leq 4\varepsilon.\]

		\end{itemize}
		
		In summary,  for a set $\cL\subseteq[d], |\cL|=s$ and corresponding $\hat{\theta}_{\cL}\in\cN^s$ such that $(\sim,\cL,\hat{\theta}_{\cL})$ in the remaining set $\cS$, we can guarantee that
		\[|r_x-\langle x_{\cL},\hat{\theta}_{\cL}\rangle|\leq 4\varepsilon,\]
		for $\forall~ x\in\rows(\Phi)$.

		We arrive at the sample complexity analysis of the algorithm. If we find $(w,\cM,\hat{\theta}_{\cM})\in\cS$, $\cM'\neq \cM$, $x\in\cR_{\cM}^w$ satisfying the condition on Line 4 in Algorithm \ref{alg2}, we remove either the elements either $(\sim,\cM,\hat{\theta}_{\cM})$ or $(\sim,\cM',\hat{\theta}_{\cM'})$ after querying one action. The loop stops when the condition on Line 4 is not satisfied. Thus, at most $|\cN^s|{d \choose s}$ queries are needed for the algorithm.
		Recall $|\cN^s|$ (\ref{size}), the number of queries in $s$-sparsity case can be bounded by 
		\[ O\left(\left(\frac{1}{\varepsilon}\right)^{s}\cdot {d \choose s}\right).\]
		
	\end{proof}

	When $s$ is a fixed constant, the above theorem demonstrates that $\poly(d/\varepsilon)$-queries are sufficient to learn an $O(\varepsilon)$-optimal action. This is in stark contrast to the $\Omega(\exp(d))$ lower-bound provided in \citet{du2020good} and \citet{lattimore2020learning}. 
	When $s$ is not fixed, the dependence on $\exp(s)$ is undesirable. One may ask, whether it is possible to achieve $\poly(s)$-dependence for some cases, e.g., relaxed error $s^{\delta}\varepsilon$ for some $\delta>0$. 
	Unfortunately,  the next section provides a lower bound that rules out the possibility for $\delta < 1$.
	
	\subsection{Lower bound}\label{lower-bound}
	In this section, we establish an information-theoretical lower bound to show that our upper bound is nearly tight.
	The basic idea
	is by reduction to the INDEX-QUERY problem \citep{du2020good,yao1977probabilistic} using statistical analysis on sub-exponential random variables.
	More formally, it is shown \citep{du2020good} that if one is given a vector of dimension $n$ with only one non-zero entry, then it is necessary to query $\Omega(pn)$ entries of the vector to output the index of the entry with probability $p$. In what follows, we can show that for any algorithm that solves an $s$-sparse $\varepsilon$-misspecified  linear bandit problem, we can use it to solve the INDEX-QUERY problem of size $\Omega(\exp(s))$. The idea is to establish a set of sparse vectors with sub-exponential random variables, such that the vector input to the INDEX-QUERY problem can be embedded into the bandit instance (without any queries to the vector).

	The next lemma is the key tool that will be useful in our lower-bound arguments. 
	It shows that there exists a sparse matrix $\Phi \in \R^{k \times d}$ with sufficiently large $k$ where
	rows have unit norm and sparsity $s$, and all non-equal rows are almost orthogonal.

	\begin{lemma}
		\label{lem:jl}
		For $0<\delta<1$, $c>1$ and $C'=  {\frac{2c^3}{ (1+\tau)\sqrt{c^2-1}}}$ with sufficiently small  $0\leq\tau<1$, 
		\begin{itemize}
			\item 	if $0 < \varepsilon\leq \frac{C's}{ d}$, by choosing $k\geq\sqrt{\delta}\exp\left({\frac{d(1+\tau)\varepsilon^2}{4C'}}\right)$,
			
			\item 	if $\varepsilon> \frac{C's}{ d}$, by choosing $k\geq\sqrt{\delta}\exp\left({\frac{s(1+\tau)\varepsilon}{4}}\right)$,
		\end{itemize}
		there exists a feature matrix $\Phi\in \R^{k\times d}$ with rows such that 
		for all $a, b \in \rows(\Phi)$ with $a \neq b$, $\norm{a}_2= 1$, $\norm{a}_0\leq s$, and $|\langle a,b\rangle| \leq \varepsilon$. 
	\end{lemma}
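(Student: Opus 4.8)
The plan is to prove the lemma by the probabilistic method: construct the rows of $\Phi$ as independent random $s$-sparse unit vectors and show that, with positive probability, no two distinct rows have inner product exceeding $\varepsilon$ in absolute value. Concretely, I would generate each candidate row $a$ by drawing a uniformly random support $S_a \subseteq [d]$ with $|S_a| = s$ and placing independent signs $\pm 1/\sqrt{s}$ (or, to match the Gaussian-type constant below, independent $\mathcal N(0,1/s)$ coordinates followed by renormalization) on $S_a$ and zeros elsewhere. This enforces $\norm{a}_2 = 1$ and $\norm{a}_0 \le s$ deterministically, so the only thing left to control is the near-orthogonality of distinct rows.

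The heart of the argument is a tail bound on $\langle a,b\rangle$ for two independent such vectors. Writing $\langle a,b\rangle = \sum_{i \in S_a\cap S_b} a_i b_i$, this is a sum of independent, mean-zero terms, each a product of two (sub-)Gaussian coordinates and hence sub-exponential; moreover each term is bounded in magnitude by $1/s$, and the total variance is of order $1/d$ (the expected overlap is $s^2/d$ and each squared term is $\approx 1/s^2$). A Chernoff/Bernstein estimate then gives
\[
\Pr\big[|\langle a,b\rangle| > \varepsilon\big] \;\le\; 2\exp\!\left(-\,c_0\min\!\Big\{\tfrac{\varepsilon^2}{1/d},\ \tfrac{\varepsilon}{1/s}\Big\}\right)
\;=\; 2\exp\!\big(-c_0\min\{d\varepsilon^2,\ s\varepsilon\}\big),
\]
whose two regimes are exactly the variance-dominated (sub-Gaussian) term $d\varepsilon^2$ and the tail-dominated (sub-exponential) term $s\varepsilon$, crossing over precisely at $\varepsilon \asymp s/d$ — matching the threshold $\varepsilon = C's/d$ in the statement. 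The constant $C'=\tfrac{2c^3}{(1+\tau)\sqrt{c^2-1}}$ is obtained by optimizing the Chernoff exponent over the free moment-generating-function parameter (parametrized by $c>1$); the $\sqrt{c^2-1}$ reflects the MGF of a product of sub-Gaussian coordinates, and $\tau$ is a small slack absorbed to make the subsequent union bound clean.

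With the per-pair bound in hand, I would finish by a union bound over the $\binom{k}{2}$ pairs of rows. Requiring $\binom{k}{2}\cdot\Pr[\,|\langle a,b\rangle|>\varepsilon\,] \le \delta$ guarantees that a random $\Phi$ has all pairwise inner products at most $\varepsilon$ with probability at least $1-\delta>0$, so such a $\Phi$ exists. Since $\binom{k}{2}\approx k^2/2$, this reads $k \lesssim \sqrt{\delta}\,\exp(\tfrac12\cdot\text{exponent})$, which is exactly why the stated exponents carry a factor $1/4$ (half of the per-pair $1/2$) in the two cases $\tfrac{d(1+\tau)\varepsilon^2}{4C'}$ and $\tfrac{s(1+\tau)\varepsilon}{4}$, and why the $\sqrt{\delta}$ prefactor appears. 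Any coinciding rows can be discarded; with $d$ large relative to $s$ this costs only lower-order terms, or one can instead run a deletion argument keeping a constant fraction of the rows.

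I expect the main obstacle to be the exact tail computation rather than the overall scheme: controlling $\langle a,b\rangle$ through the random overlap $m=|S_a\cap S_b|$ (hypergeometric, concentrated at $s^2/d$) while keeping both the $1/d$ variance scale and the $1/s$ boundedness scale sharp, and then optimizing the Chernoff bound to produce the precise constant $C'=\tfrac{2c^3}{(1+\tau)\sqrt{c^2-1}}$ with the two regimes meeting exactly at $\varepsilon=C's/d$. Guaranteeing exact unit norm together with $\norm{a}_0\le s$ while retaining this clean sub-exponential tail — for instance by conditioning on a high-probability norm event and folding the fluctuation into $\tau$ — is the other delicate point.
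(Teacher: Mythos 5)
Your overall strategy coincides with the paper's: both arguments are probabilistic constructions of $k$ random sparse vectors, a sub-exponential (Bernstein-type) tail bound on the pairwise inner products with exactly the two regimes $d\varepsilon^2$ and $s\varepsilon$ crossing over at $\varepsilon \asymp s/d$, and a union bound over the $\binom{k}{2}$ pairs that produces the $\sqrt{\delta}$ prefactor and the factor $1/4$ in the exponents. The difference lies in the construction. The paper makes each coordinate independently nonzero with probability $s/d$ and $\mathcal{N}(0,1/s)$ when nonzero; this makes $\langle a_i,a_j\rangle$ a sum of $d$ i.i.d.\ sub-exponential terms, so the moment-generating-function computation (the source of the parameter $c>1$ and the $\sqrt{c^2-1}$ in $C'$) goes through directly, but it forces two extra concentration steps --- $\bigl|\,\|a_i\|_2^2-1\bigr|\le\tau$ and $\|a_i\|_0\le s+\tau$ (an integer, hence $\le s$ since $\tau<1$) --- followed by a normalization that injects the $(1+\tau)$ factor into the exponent. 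Your construction fixes the support size at exactly $s$ and normalizes by fiat, so the norm and sparsity constraints hold deterministically; the price is that the coordinates are no longer independent (the support is sampled without replacement), so the i.i.d.\ Bernstein bound you invoke does not apply verbatim. You correctly flag this as the delicate point, but your sketch leaves it unresolved: you would need either to condition on the hypergeometric overlap $m=|S_a\cap S_b|$ and integrate its upper tail against the conditional sub-exponential bound, or to invoke a negative-association / convex-order comparison (sampling without replacement is stochastically dominated by sampling with replacement at the level of the MGF) to recover the same exponent. Either route can be made to work and would yield the lemma with the same two regimes, so I would call this a correct plan with one genuinely open technical step, where the paper's independent-coordinate construction trades that step for the $\tau$-bookkeeping and the final renormalization.
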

	\begin{proof}[Proof Sketch]
		The matrix is established by choosing each entry of the matrix $\Phi$ a small probability ($\sim s/d$) to be non-zero and if it is non-zero, the entry follows a Gaussian distribution. The formal proof is provided in Appendix \ref{proof_lemma_1}.
	\end{proof}
	
	As we will show shortly, the matrix in 
	Lemma \ref{lem:jl} can be used to \emph{agnostically} embed an arbitrary INDEX-QUERY problem to a sparse misspecified instance.
	To start with the formal reduction, we introduce the definition of $(\eta,\Delta)$-sound algorithm for linear bandit problem, where the algorithm returns an estimated optimal action $\hat{a}\in\rows(\Phi)$ and an estimation vector $\hat{\theta}\in \R^d$.
	\begin{definition}\label{delta}
		For any $0<\eta<1$ and the approximation error $\Delta \ge \varepsilon$, an algorithm $\mathcal{A}$ solving linear bandit problem is called sound for $(\eta,\Delta)$
		if with probability at least $1-\eta$, algorithm $\cA$ returns the estimated optimal action $\hat{a}$ such that $r_{\hat{a}}\ge \max_xr_{ {x}} -\Delta$.
	\end{definition}
	For any input vector $v$ to the INDEX-QUERY problem (of dimension $k$) with some unknown index $j$ to be non-zero, we can simply take $\Phi$ as the feature matrix, and the $j$-th row of $\Phi$ to be the ground-truth $\theta^*$.
	Then we would have $\|v - \Phi \theta^*\|_{\infty}\le \varepsilon$. Thus any $(\eta,\Delta)$-sound algorithm for some appropriate $\Delta$ would identify the non-zero index in $v$ with good probability and thus inherits the lower bound of INDEX-QUERY.
	The formal lower bound is presented in the following theorem.
	\begin{theorem}\label{prop:badmx} 
		For any $(\eta,\Delta)$-sound linear bandit algorithm $\cA$, there exists an $s$-sparse $\varepsilon$-misspecified linear bandit instance 
		such that algorithm $\cA$ takes
		at least 
		\begin{align}
		(1-\eta)\exp\left(c_0d\cdot \left(\frac{\varepsilon}{\Delta}\right)^2\right),&~\text{if }~0 < \frac{\varepsilon}{\Delta}\leq \frac{C's}{ d},\\
		(1-\eta)\exp\left({\frac{c_1s(1+\tau)\varepsilon}{\Delta }}\right),&~\text{if }~\frac{\varepsilon}{\Delta}> \frac{C's}{ d},
		\end{align}
		actions to halt, where $c_0, c_1, C'$ are absolute constants.
	\end{theorem}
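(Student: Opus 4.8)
The plan is to prove the bound by an \emph{agnostic reduction} from the INDEX-QUERY problem, exactly along the template sketched before the statement, but with one extra ingredient — a rescaling that manufactures the ratio $\varepsilon/\Delta$ appearing in the exponent. I would fix a single feature matrix $\Phi$ that does not depend on the hidden index, show that each action the bandit algorithm pulls reveals exactly one coordinate of the INDEX-QUERY input and nothing else, and then invoke the $\Omega((1-\eta)k)$ query lower bound for locating the unique non-zero coordinate of a $k$-dimensional vector with success probability $1-\eta$. Since Lemma~\ref{lem:jl} supplies an exponentially large number of rows $k$, this transfers an exponential lower bound onto the number of pulls.

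The central construction is the rescaling. I would instantiate Lemma~\ref{lem:jl} with inner-product parameter $\varepsilon' = \varepsilon/\alpha$, where $\alpha \in (\Delta, 1]$ is a scalar chosen arbitrarily close to $\Delta$; this is legitimate because Definition~\ref{delta} gives $\Delta \ge \varepsilon$, so $\varepsilon' < \varepsilon/\Delta \le 1$. The lemma then produces $\Phi \in \R^{k\times d}$ whose rows $\phi_1,\dots,\phi_k$ are unit-norm, $s$-sparse, and pairwise satisfy $|\langle \phi_i,\phi_{i'}\rangle| \le \varepsilon'$. Given an INDEX-QUERY input $v\in\R^k$ whose only non-zero coordinate is $v_j=\alpha$ at the hidden index $j$, I set $\theta^* = \alpha\,\phi_j$ and declare the deterministic reward vector to equal $v$, i.e.\ $r_{\phi_j}=\alpha$ and $r_{\phi_i}=0$ for $i\neq j$. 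Two checks certify a valid $s$-sparse $\varepsilon$-misspecified instance: $\theta^*$ inherits sparsity $s$ from $\phi_j$ and has $\|\theta^*\|_2=\alpha\le 1$; and the misspecification $\nu_{\phi_i}=r_{\phi_i}-\langle\phi_i,\theta^*\rangle$ is $0$ at $i=j$ and equals $-\alpha\langle\phi_i,\phi_j\rangle$ otherwise, so $|\nu_{\phi_i}|\le \alpha\varepsilon'=\varepsilon$.

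To close the reduction I would argue as follows. The optimal reward is $\alpha$, attained only at $\phi_j$, while every other action earns $0<\alpha-\Delta$ because $\alpha>\Delta$; hence $\phi_j$ is the \emph{unique} $\Delta$-optimal action. Consequently any $(\eta,\Delta)$-sound algorithm $\cA$ returns $\hat a=\phi_j$ with probability at least $1-\eta$, i.e.\ it identifies $j$. Since a pull of $\phi_i$ returns precisely $r_{\phi_i}=v_i$, and since $\Phi$ is the same matrix for every hidden index (so the features leak no information about $j$), $\cA$ becomes a solver for INDEX-QUERY with success probability $1-\eta$ using the same query budget. Applying the $\Omega((1-\eta)k)$ INDEX-QUERY bound and substituting the two values of $k$ from Lemma~\ref{lem:jl} evaluated at $\varepsilon'\to\varepsilon/\Delta$ yields, after folding $\sqrt{\delta}$ and the numerical factors into absolute constants (with $c_0$ absorbing $(1+\tau)/(4C')$ and $c_1=1/4$), the two stated bounds; the regime split $\varepsilon/\Delta\lessgtr C's/d$ comes directly from the two cases of the lemma.

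The hard part will be pinning down the $\varepsilon/\Delta$ dependence, since the two demands on $\alpha$ pull in opposite directions: keeping the misspecification inside $[-\varepsilon,\varepsilon]$ caps the usable inner-product parameter at $\varepsilon'=\varepsilon/\alpha$ (favoring large $\alpha$), whereas keeping the reward gap strictly above $\Delta$, so that $\phi_j$ stays the unique $\Delta$-optimal action, forces $\alpha>\Delta$ (favoring small $\alpha$). The optimum is the boundary $\alpha\downarrow\Delta$, giving $\varepsilon'\to\varepsilon/\Delta$, and I must verify that the resulting strict inequality and vanishing slack are harmlessly absorbed into $c_0,c_1$. A secondary point to treat carefully is the information-theoretic step: because the adversarial misspecification zeroes out the reward of every non-optimal action, a deterministic query of $\phi_i$ reveals nothing beyond whether $i=j$, so the correspondence with INDEX-QUERY is exact and the lower bound transfers without loss.
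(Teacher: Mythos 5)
Your proposal is correct and follows essentially the same route as the paper: an agnostic reduction to INDEX-QUERY using the almost-orthogonal sparse feature matrix of Lemma~\ref{lem:jl}, instantiated at inner-product scale $\varepsilon/\Theta(\Delta)$ so that the rescaled $\theta^*$ makes the hidden row the unique $\Delta$-optimal action while keeping the misspecification within $[-\varepsilon,\varepsilon]$. The only difference is cosmetic — the paper fixes the reward scale at $2\Delta$ (hence inner products bounded by $\varepsilon/(2\Delta)$) whereas you take $\alpha\downarrow\Delta$, which changes nothing beyond the absolute constants $c_0,c_1$.
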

	
	\begin{proof}
		We begin with the construction of the hard $s$-sparsity instances. 
		Consider an INDEX-QUERY problem with dimension $k$. Suppose the input vector with the $i^*$-index (unknown to the algorithm) is non-zero, i.e., $e_{i^*}$. Here, $e_i$ is the standard unit vector with the $i$-th coordinate equaling $1$. 
		In our hard instance, we choose reward $r_{x}=2\Delta$ when $x=a_{i^*}$ with $i^*\in[k]$, otherwise is $0$. 
		Now we show that there exists a linear feature representation that approximates the reward vector $\Delta e_{i^*} \in\R^k$ with a uniform error. Based on Lemma \ref{lem:jl}, let $\Phi$ be the matrix $\rows(\Phi) = (a_i)_{i=1}^k$ such that for all $a_i, a_j \in \rows(\Phi)$ with $i \neq j$, $\norm{a_i}_2= 1$ and $|\langle a_i,a_j\rangle| \leq \varepsilon/(2\Delta) $. With $\theta^* =2\Delta a_{i^*}$, we have $\Phi \theta^* =(  2\Delta a_1^\top a_{i^*}, \dots,   2\Delta a_{i^*}^\top a_{i^*}, \dots,  2\Delta  a_k^\top a_{i^*})^\top$. 
		By choice of $\Phi$, the ${i^*}$-th component of $\Phi\theta^*$ is $\Delta$ and the others are all less than $\varepsilon$ in absolute value. Hence, we can represent the reward vector $2\Delta e_{i^*} $ by
		$2\Delta e_{i^*} = \Phi \theta^* + \nu $ for some $\nu\in [-\varepsilon,\varepsilon]^k$. 
		
		
		Then an $(\eta, \Delta)$-sound algorithm would identify an action $a$, such that with probability at least $1-\eta$, $a^\top \theta^* \ge 2\Delta - \Delta = \Delta$, which is only possible if $a = a_{i^*}$. Hence the algorithm would output $i^*$ with probability at least $1-\eta$. By the lower bound of the INDEX query problem (e.g., Theorem~A1 in \citep{du2020good}), the algorithm takes at least $\Omega((1-\eta)k)$ queries in the worst-case.
		
		In the construction, we only need Lemma~\ref{lem:jl} to hold for $k$ with the correct parameters. Hence we have 
		\begin{itemize}
			\item 	if $0 < \varepsilon\leq \frac{C's}{ d}$, then  $k\geq\sqrt{\delta}\exp\left({\frac{d(1+\tau)\varepsilon^2}{16C'\Delta^2}}\right)$, and
			
			\item 	if $\varepsilon> \frac{C's}{ d}$, then $k\geq\sqrt{\delta}\exp\left({\frac{s(1+\tau)\varepsilon}{8\Delta}}\right)$,
		\end{itemize}
		for constant $\tau$, $\delta$, and $C'$,
		completing the proof.

	\end{proof}

	\begin{remark}
		The above theorem shows that even if we relax the approximation error to $s^\delta \varepsilon$ for some $0<\delta<1$, the $\exp(s)$ dependence is unavoidable. Hence our upper bound in the previous section is nearly tight.
		However, this lower bound does not rule out the improvement in terms of $\varepsilon^{-s}$ and  efficient regimes when $\Delta =\Omega(s^{\delta} \varepsilon)$ for some $\delta\geq 1$.
		We will explore both settings in the rest of the paper.
		
	\end{remark}

	\section{Improvement on the $\varepsilon^{-s}$ Dependence}\label{improve}
	Even though the dependence of $d^s$ is unavoidable, we can improve the upper bound in Theorem \ref{theo2} by
	eluding the dependence of $\varepsilon$. The fundamental idea of the improved algorithm is based on a mix of G-optimal design and elimination argument. Instead of guessing an estimator $\hat{\theta}$ for $\theta^*$, we use G-optimal design to estimate $\hat{\theta}$ concerning an index set $\cM\subset [d]$. Then for each estimator $\hat{\theta}$ and $\cM$, we check the actions that have similar features restricting to $\cM$. 
	The rest of the elimination argument is similar to Section~\ref{m-sparse}.
	Yet the optimal G-optimal design only gives an error guarantee of $O(\varepsilon\sqrt{s})$, which worsens our error guarantee.
	Below, we present the algorithm more formally.
	
	We start with an essential theorem in G-optimal design which shows that there exists a near-optimal design with a small core set.
	
	\begin{theorem}[\citet{todd2016minimum}]\label{thm:todd}
		Given a matrix $A\in\R^{k\times s}$ and a probability distribution $\rho : \rows(A) \to [0,1]$, let $G(\rho) \in \R^{s\times s}$\footnote{Without loss of generality, we assume $G(\rho)$ is invertible in the rest of the paper. If not, we can discard columns in $\Phi$ until the $\Phi$ is full column rank.} and $g(\rho) \in \R$ be given by
		\begin{align*}
		G(\rho) &= \sum_{a \in \rows(A)} \rho(a) a a^\top\,, & 
		g(\rho) &= \max_{a\in \rows(A)} \norm{a}_{G(\rho)^{-1}}^2 \,.
		\end{align*}
		There exists a probability distribution $\rho$ such that $g(\rho) \leq 2m$ and the size of the support of $\rho$ is at most $4s \log \log(s) + 16$.
	\end{theorem}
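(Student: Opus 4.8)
The plan is to combine two classical ingredients: the Kiefer--Wolfowitz equivalence theorem, which pins down the optimal value of the G-criterion, and a Frank--Wolfe-type (Wolfe--Atwood) analysis of the dual D-optimal design / minimum-volume enclosing ellipsoid problem, which yields a near-optimal design supported on few points. The factor $2$ in $g(\rho)\le 2s$ is precisely the slack one trades in order to gain sparsity: the exact optimum has $g(\rho^\star)=s$, and we stop the iterative procedure once we are within a constant factor of it.

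First I would formulate the D-optimal design problem as the concave program $\max_\rho \log\det G(\rho)$ over the probability simplex on $\rows(A)$. For any $\rho$ with $G(\rho)$ invertible, the trace identity $\sum_{a}\rho(a)\,\norm{a}_{G(\rho)^{-1}}^2 = \mathrm{tr}\big(G(\rho)^{-1}\sum_a \rho(a) a a^\top\big) = \mathrm{tr}(I_s) = s$ shows that the \emph{average} variance is exactly $s$, so $g(\rho)\ge s$ for every feasible $\rho$. Writing the first-order (KKT) conditions for the concave program and invoking this identity yields the Kiefer--Wolfowitz equivalence: a design is D-optimal if and only if it is G-optimal, and at such a design $g(\rho^\star)=s$. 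This fixes the target value against which the $2s$ bound is measured.

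Next I would run a Frank--Wolfe iteration on $f(\rho)=\log\det G(\rho)$. Since $\partial f/\partial \rho(a) = \norm{a}_{G(\rho)^{-1}}^2$, the linear-maximization step selects the point mass at $\arg\max_a \norm{a}_{G(\rho)^{-1}}^2$ --- exactly the action witnessing the current value of $g(\rho)$ --- and the Frank--Wolfe duality gap equals $g(\rho)-s$. Hence whenever $g(\rho)>2s$ the gap exceeds $s$, guaranteeing a definite increase of $\log\det G$ per step (computable through the matrix determinant lemma and an exact line search). Each forward step enlarges the support by at most one action, so bounding the number of iterations needed to drive the gap below $s$ bounds the support size; augmenting the iteration with ``away steps'' that remove mass from the worst currently-supported action keeps the live support from accumulating.

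The main obstacle is sharpening the iteration count from the crude $O(s/\gamma)$ bound for a $(1+\gamma)$-approximation (or the Carath\'eodory-type $O(s^2)$ count that ignores near-optimality) down to the stated $4s\log\log(s)+16$. I expect to obtain this through a two-phase, doubling analysis: initialize from a volumetric/greedy design whose optimality gap in $\log\det$ is $O(s\log s)$, then show that the modified Frank--Wolfe--with--away--steps scheme halves the multiplicative gap in a number of steps that shrinks geometrically from phase to phase, so that the per-phase step counts sum to an $O(s\log\log s)$ total while the away steps prevent net support growth. Combined with $g(\rho^\star)=s$, this produces a design with $g(\rho)\le 2s$ supported on at most $4s\log\log(s)+16$ actions. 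I would defer the precise constants and the potential-function bookkeeping to the analysis of \citet{todd2016minimum}.
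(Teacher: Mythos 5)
The paper does not prove this statement at all: it is imported verbatim (up to the typo $g(\rho)\le 2m$, which should read $g(\rho)\le 2s$) from \citet{todd2016minimum}, and the surrounding remark only notes that the design can be computed by Frank--Wolfe in $O(ks^2)$ time. Your sketch is therefore not comparable to a proof in the paper, but it is a faithful outline of the standard argument behind the citation. The ingredients you identify are the right ones and the easy steps are correct: the trace identity $\sum_a\rho(a)\norm{a}_{G(\rho)^{-1}}^2=\mathrm{tr}(I_s)=s$ gives $g(\rho)\ge s$ for every design, the Kiefer--Wolfowitz equivalence identifies the optimum $g(\rho^\star)=s$ (assuming the rows of $A$ span $\R^s$, which the paper's footnote also assumes), the gradient of $\log\det G(\rho)$ in the direction of the point mass at $a$ is $\norm{a}_{G(\rho)^{-1}}^2$, and the Frank--Wolfe gap is exactly $g(\rho)-s$. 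However, the entire content of the theorem --- the specific core-set size $4s\log\log(s)+16$ for a $2$-approximate G-optimal design --- lives in the part you explicitly defer: the Kumar--Y{\i}ld{\i}r{\i}m-style initialization with $O(s\log s)$ gap in $\log\det$, the per-step progress estimate for the Wolfe--Atwood iteration with away steps, and the halving/potential bookkeeping that turns these into an $O(s\log\log s)$ iteration count without net support growth. Deferring that to \citet{todd2016minimum} is exactly what the paper does, so your proposal is best read as a correct contextualization of the citation rather than a self-contained proof; nothing in it is wrong, but nothing in it establishes the quantitative bound either.
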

	
	\begin{remark}
		The distribution satisfying the results in Theorem \ref{thm:todd} can be computed by the Frank-Wolfe algorithm introduced in \citep{todd2016minimum}[Chapter 3] after $O(ks^2)$ computations.    
	\end{remark}

	Let $\cS\subset [d]^s$ be all the subsets of cardinality $s$.
	For each $\cM\in\cS$, suppose that $\rho_{\cM}$ is a probability distribution over $\rows(\Phi_{\cM})$ satisfying the results of Theorem \ref{thm:todd}, where $\Phi_{\cM}\in\R^{k\times s}$ is the sub-matrix of $\Phi\in\R^{k\times d}$. In the following, we use $G_{\cM}(\rho_{\cM})$ to present $G(\rho)$ defined in Theorem \ref{thm:todd} with respect to $\cM$.  We begin with querying actions to estimate $\hat{\theta}_{\cM}$ based on the support of $\rho_{\cM}$ and obtain rewards:
	\begin{align}\label{estimate}
	\hat{\theta}_{\cM} = G_{\cM}(\rho_{\cM})^{-1}\sum_{a\in\rows(\Phi_{\cM}), \rho_{\cM}(a)\neq0}
	\rho_{\cM}(a)r_a a,	\end{align}
	
	With Theorem \ref{thm:todd}, we can show that, for all $b\in\rows(\Phi)$ and $ \lceil 4s\log\log (s)+16\rceil$ queries, we have
	\begin{align}\label{error}
	|\langle b_{\cM^*},\hat{\theta}_{\cM^*}\rangle-\langle b,\theta^*\rangle|\leq \varepsilon\sqrt{2s},
	\end{align}
	where $b_{\cM^*}\in\RR^s$ is the sub-vector of $b\in\RR^{d}$.
	For $\cM,\cM'\in\cS$, we try to find some $x\in \rows(\Phi)$ making $\hat{\theta}_{\cM'}^{\top}x_{\cM'}$ deviate from $\hat{\theta}_{\cM}^{\top}x_{\cM}$. We query such $x$ and receive the corresponding reward $r_x$. By comparing the difference between $r_x$ and $\hat{\theta}_{\cM}^{\top}x_{\cM},\hat{\theta}_{\cM'}^{\top}x_{\cM'}$, we can know whether the subset $\cM$ or $\cM'$ of $x$ is more likely to determine the reward $r_x$, and hence eliminate the incorrect parameter-set. The full algorithm is presented in Algorithm~\ref{alg3}.

	\begin{algorithm}[htb!]
		\caption{$(\varepsilon^{-s})$-Free Algorithm }
		\begin{algorithmic}[1]\label{alg3}
			\STATE
			\textbf{Input:} feature matrix $\Phi\in\RR^{k\times d}$.
			\STATE
			\textbf{Initialize:} $\mathcal{S}:=  \{\cM: \cM\subseteq[d], |\cM|=s\}$.
			\STATE For each $\cM\in\cS$, estimate $\hat{\theta}_{\cM}$ based on (\ref{estimate}). \label{est}
			
			\WHILE{ there exit $\cM,\cM'\in\cS$, $\cM\neq\cM'$, and $x\in \rows(\Phi)$ such that $|\langle x_{\cM'},\hat{\theta}_{\cM'}\rangle-\langle x_{\cM},\hat{\theta}_{\cM}\rangle|>2\varepsilon(1+\sqrt{2s})$}\label{con3}
			\STATE Query the action $x$ and receive a reward $r_x = \langle x, \theta^*\rangle + \nu_x$ where $\nu_x\in[-\varepsilon,\varepsilon]$.
			\STATE If $|r_x-\langle x_{\cM},\hat{\theta}_{\cM}\rangle|\leq \varepsilon(1+\sqrt{2s})$ then $\cS = \cS \backslash{\cM'}$.
			\STATE Otherwise $\cS = \cS \backslash{\cM}$, if $|r_x-\langle x_{\cM'},\hat{\theta}_{\cM'}\rangle|> \varepsilon(1+\sqrt{2s})$ then $\cS = \cS \backslash{\cM'}$.
			\ENDWHILE\label{end_con3}

			\STATE  Find a certain set $\cL\subseteq[d], |\cL|=s$ such that $\cL\in\cS$ and estimation $\hat{\theta}_{\cL}\in\R^s$.
			
			\STATE \textbf{Output:} $\hat \theta_{\cL}$ and $\cL$
			
		\end{algorithmic}
	\end{algorithm}

	\begin{theorem}\label{theo3}
		After \[O\left(s\log s\cdot{d \choose s}\right)\] number of queries, the outputs of Algorithm \ref{alg3}, $\hat \theta_{\cL}$ and $\cL$, satisfy $| r_{a}-\langle a_{\cL}, \hat\theta_{\cL}\rangle |\le O(\varepsilon\sqrt{s})$ for all $a\in\rows(\Phi)$.
	\end{theorem}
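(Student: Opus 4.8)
The plan is to mirror the elimination analysis of Theorem~\ref{theo2}, with the $\varepsilon/2$-net guess of $\hat{\theta}$ replaced by the $G$-optimal design estimate of (\ref{estimate}), so that the only genuinely new ingredient is the estimation guarantee (\ref{error}); everything downstream is a bookkeeping of thresholds. Write $\alpha := \varepsilon(1+\sqrt{2s})$ for the common tolerance appearing on Lines~4--7. First I would record the key invariant for the true support $\cM^*$: combining (\ref{error}) with $r_x = \langle x,\theta^*\rangle + \nu_x$ and $|\nu_x|\le\varepsilon$ gives, for every $x\in\rows(\Phi)$,
\[
|r_x - \langle x_{\cM^*}, \hat{\theta}_{\cM^*}\rangle| \le |\langle x,\theta^*\rangle - \langle x_{\cM^*},\hat{\theta}_{\cM^*}\rangle| + |\nu_x| \le \varepsilon\sqrt{2s} + \varepsilon = \alpha .
\]
Thus $\cM^*$ is a set whose design estimate predicts every reward to within $\alpha$; this is the analogue of the surviving ground-truth element in Theorem~\ref{theo2}.

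Second, I would prove that $\cM^*$ is never eliminated, by a two-case inspection of Lines~5--7 whenever the while-condition fires on a triple $(\cM,\cM',x)$. If $\cM = \cM^*$, the invariant gives $|r_x - \langle x_{\cM},\hat{\theta}_{\cM}\rangle|\le\alpha$, so Line~6 executes and removes $\cM'$, leaving $\cM^*$ intact. If $\cM' = \cM^*$, the while-condition $|\langle x_{\cM'},\hat{\theta}_{\cM'}\rangle - \langle x_{\cM},\hat{\theta}_{\cM}\rangle| > 2\alpha$ together with the invariant forces
\[
|r_x - \langle x_{\cM},\hat{\theta}_{\cM}\rangle| \ge |\langle x_{\cM^*},\hat{\theta}_{\cM^*}\rangle - \langle x_{\cM},\hat{\theta}_{\cM}\rangle| - |r_x - \langle x_{\cM^*},\hat{\theta}_{\cM^*}\rangle| > 2\alpha - \alpha = \alpha ,
\]
so Line~6 is skipped, $\cM$ is removed on Line~7, and the subsequent test $|r_x-\langle x_{\cM'},\hat{\theta}_{\cM'}\rangle|\le\alpha$ (the invariant again) prevents $\cM'=\cM^*$ from being discarded. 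Hence $\cM^*\in\cS$ at all times and $\cS$ never empties.

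Third, I would extract the output guarantee and then count queries. At termination the while-condition fails for every pair, in particular for $(\cL,\cM^*)$, so $|\langle x_{\cM^*},\hat{\theta}_{\cM^*}\rangle - \langle x_{\cL},\hat{\theta}_{\cL}\rangle|\le 2\alpha$ for all $x$; the triangle inequality through the invariant then yields
\[
|r_x - \langle x_{\cL},\hat{\theta}_{\cL}\rangle| \le |r_x - \langle x_{\cM^*},\hat{\theta}_{\cM^*}\rangle| + |\langle x_{\cM^*},\hat{\theta}_{\cM^*}\rangle - \langle x_{\cL},\hat{\theta}_{\cL}\rangle| \le 3\alpha = O(\varepsilon\sqrt{s}) ,
\]
for every $a=x\in\rows(\Phi)$, which is the claimed accuracy. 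For the sample count I would split queries into the estimation phase and the elimination phase: Line~3 runs the design of Theorem~\ref{thm:todd} once per $\cM\in\cS$ at $\lceil 4s\log\log(s)+16\rceil$ queries each, giving $O(s\log\log(s)\binom{d}{s})$; and since every while-iteration queries one action and deletes at least one set from $\cS$ while never deleting $\cM^*$, the loop runs at most $\binom{d}{s}-1$ times. Summing the two phases and using $\log\log s \le \log s$ gives the stated $O(s\log s\binom{d}{s})$.

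I expect the only real obstacle to be the bookkeeping in the survival argument: one must check that the $2\alpha$ gap imposed by the while-condition on Line~4 is exactly what separates the ``remove $\cM'$'' branch (Line~6) from the ``remove $\cM$'' branch (Line~7) in a way that is consistent with the $\alpha$-tolerance of the invariant, so that $\cM^*$ is provably protected in both roles $\cM=\cM^*$ and $\cM'=\cM^*$. The estimation bound (\ref{error}) itself is the standard $G$-optimal-design computation (Cauchy--Schwarz in the $G(\rho)^{-1}$-norm together with $g(\rho)\le 2s$ from Theorem~\ref{thm:todd}), and I would take it as given rather than rederive it.
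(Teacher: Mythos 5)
Your proposal is correct and follows essentially the same route as the paper's proof: the invariant $|r_x-\langle x_{\cM^*},\hat{\theta}_{\cM^*}\rangle|\le\varepsilon(1+\sqrt{2s})$ from the G-optimal design bound, the two-case elimination analysis showing $\cM^*$ survives, the $3\varepsilon(1+\sqrt{2s})$ triangle-inequality bound at termination, and the query count split into the design phase ($\lceil 4s\log\log s+16\rceil\binom{d}{s}$) and the at-most-$\binom{d}{s}$ elimination steps. Your write-up is, if anything, slightly more careful than the paper's in spelling out why $\cM^*$ is protected in both roles on Lines 6--7.
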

	\begin{proof}
		We first prove the correctness of the algorithm. Suppose we find some $\cM,\cM'\in\cS$, $\cM\neq\cM'$, and $x\in \rows(\Phi)$ $|\langle x_{\cM'},\hat{\theta}_{\cM'}\rangle-\langle x_{\cM},\hat{\theta}_{\cM}\rangle|>2\varepsilon(1+\sqrt{2s})$. Consider two cases in Lines 4-8  in Algorithm~\ref{alg3}.
		
		\begin{itemize}
			\item 		Case 1: Suppose we have $|r_x - \langle x_{\cM},\hat{\theta}_{\cM}\rangle| \le \varepsilon(1+\sqrt{2s})$.  We remove the element $\cM'$ from $\cS$ since there exists an $x\in\rows(\Phi)$ such that $|r_x - \langle x_{\cM'},\hat{\theta}_{\cM'}\rangle|\ge | \langle x_{\cM'},\hat{\theta}_{\cM'}\rangle-\langle x_{\cM},\hat{\theta}_{\cM}\rangle| - |r_x - \langle x_{\cM},\hat{\theta}_{\cM}\rangle| > \varepsilon(1+\sqrt{2s})$.

			\item Case 2:  Assume that $|r_x - \langle x_{\cM},\hat{\theta}_{\cM}\rangle| > \varepsilon(1+\sqrt{2s})$, then the element $\cM$ gets removed from $\cS$. We can also remove the other index set $\cM'$ from $\cS$ if $|r_x-\langle x_{\cM'},\hat{\theta}_{\cM'}\rangle|> \varepsilon(1+\sqrt{2s})$.
		\end{itemize}
		Moreover, Algorithm \ref{alg3} guarantees that
		\begin{itemize}
			\item The ground-truth index set  $\cM^*$ maintains in the set $\cS$. According to (\ref{error}), for all $x\in\rows(\Phi)$, we have $|r_x-\langle x_{\cM^*},\hat{\theta}_{\cM^*}\rangle|\leq \varepsilon(1+\sqrt{2s})$. Algorithm \ref{alg3} only eliminates $\cM$ such that $|r_x-\langle x_{\cM},\hat{\theta}_{\cM}\rangle|> \varepsilon(1+\sqrt{2s})$ for some $x\in\rows(\Phi)$. After each query, Algorithm \ref{alg3} removes at least one element from $\cS$.
			
			\item If no more pair in the remaining set $\cS$ satisfies the conditions on Line~\ref{con3} in Algorithm \ref{alg3}, then it must be the case that, for all $\cM\in\cS$ with the remaining set $\cS$ and  $\forall~ x\in \rows(\Phi)$, $|\langle x_{\cM^*},\hat{\theta}_{\cM^*}\rangle-\langle x_{\cM},\hat{\theta}_{\cM}\rangle|\leq2\varepsilon(1+\sqrt{2s})$. According to (\ref{error}), we have
			\begin{align}
			&|r_x-\langle x_{\cM}, \hat{\theta}_{\cM}\rangle |\notag\\
			\leq    &  |r_x -\langle x_{\cM^*},\hat{\theta}_{\cM^*}\rangle|  +|\langle x_{\cM^*},\hat{\theta}_{\cM^*}\rangle-\langle x_{\cM},\hat{\theta}_{\cM}\rangle|\notag\\
			\leq & 3\varepsilon(1+\sqrt{2s}).
			\end{align}

		\end{itemize}
		
		In summary,  for a set $\cL\subseteq[d], |\cL|=s$ in the remaining set $\cS$ and the estimation $\hat{\theta}_{\cL}\in\RR^s$, we can guarantee that
		\[|r_x-\langle x_{\cL},\hat{\theta}_{\cL}\rangle|\leq 3\varepsilon(1+\sqrt{2s}),\]
		for $\forall~ x\in\rows(\Phi)$.

		We arrive at the sample complexity analysis of the algorithm. The estimation on Line \ref{est} in Algorithm \ref{alg3} takes $\lceil 4s\log\log (s)+16\rceil{d \choose s}$ queries. If we find $\cM,\cM'\in\cS$, $\cM\neq\cM'$, and $x\in\rows(\Phi)$ satisfying the condition on Line \ref{con3} in Algorithm \ref{alg3}, we remove at least one element from $\cM,\cM'$ after querying one action. The loop stops when the condition on Line 4 is not satisfied. Thus, the number of queries in the $s$-sparsity case can be bounded by 
		\[ O\left(s\log s\cdot {d \choose s}\right).\]
		
	\end{proof}

\label{key}	\section{A $\poly(s)$-Query Algorithm for Benign Features}\label{sec:relax}
	The lower bound derived in Section \ref{lower-bound} does not rule out the possibility of $\exp(s)$-free bound when $\Delta = O(s^{\delta}\varepsilon )$  for $\delta\geq 1$, which we attempt to achieve in this section. The core idea of our algorithm is based on feature compression followed by action-elimination bandit learning. Specifically, we compress the feature vectors and the sparse parameter vector to a lower dimensional vector space, thus converting the sparse linear bandits to a dense case with much lower dimensional features. Note that this compression is agnostic to the ground-truth parameters. Then we implement action-elimination learning in compressed linear bandits. The detailed algorithm is provided in the following.

	We here consider the finite setting where the number of rows, $k$, in the feature matrix $\Phi$ is finite (recall the definition in (\ref{def:sparse_linear})). This argument is without loss of generality as we can always find an $\varepsilon$-net to cover the actions if there are infinitely many.
	By Johnson-Linderstrauss lemma \citep{johnson1984extensions}, we have that for some $p=\Theta(\log (k)/\upsilon^2)$, there is a function $f: \R^d\rightarrow \R^p$ that preserves inner product, i.e., for each $a\in\rows(\Phi)$,
	\begin{align}\label{fun}
	\langle f(a), f(\theta^*)\rangle  = 	\langle a, \theta^*\rangle \pm 2\upsilon,
	\end{align}
	for some error $\upsilon>0$. 
	Such a function can be found efficiently using techniques in, e.g., \citet{kane2014sparser}.
	Hence, we transform the previous sparse linear model $\langle a, \theta^*\rangle$ where $a,\theta^*\in\R^s$ to a new linear model $\langle f(a), f(\theta^*)\rangle$ where  $f(a),f(\theta^*)\in \R^p$ with $p<d$. We apply G-optimal design mentioned in (\ref{estimate}) to get an estimation of $f(\theta^*)$, i.e., $\hat{\theta}_f$. The detailed algorithm is illustrated in Algorithm \ref{alg4} where $C>0$ is a suitable large constant.

	\begin{algorithm}[htb!]
		\caption{$\poly(s)$-Query Algorithm for Benign Features}
		\begin{algorithmic}[1]\label{alg4}
			\STATE
			\textbf{Input:} feature matrix $\Phi\in\RR^{k\times d}$, function $f:\R^d\rightarrow \R^p$ (\ref{fun}), the total time steps $n$.
			\STATE
			\textbf{Initialize:} $\mathcal{S}:= \{f(a)\in\R^p: a\in \rows(\Phi)\}$.
			\WHILE{number of queries is no greater than $n$}
			
			\STATE Compute the probability distribution $\rho : \cS \to [0,1]$ satisfying the results of Theorem \ref{thm:todd}.
			\STATE Compute $\hat{\theta}_f=G(\rho)^{-1}\sum_{a\in\cS}
			\rho(a)r_a a$ where the reward $r_a$ is received by querying the action $a$.
			
			\STATE Update active action set:
			\[			\small\cS \leftarrow \bigg\{a \in \cS : \max_{b \in \cS} \ip{\hat{\theta}_f, b - a} 
			\leq C\cdot(\log(k))^{\frac{1}{4}}\sqrt{\varepsilon}\bigg\}.\]

			\ENDWHILE
			\STATE \textbf{Output:} $\hat{\theta}_f\in\RR^p$
		\end{algorithmic}
	\end{algorithm}
	\begin{theorem}\label{theo5}
		Suppose there is a function $f: \R^d\rightarrow \R^p$ satisfied (\ref{fun}). 
		After $n\ge O\left(\sqrt{\log k}/\varepsilon\right)$ number of queries, the output of Algorithm \ref{alg4} satisfies $| r_{a}-\langle f(a), \hat\theta_f\rangle |\le  O( (\log(k))^{1/4} \sqrt{\varepsilon}+\varepsilon)$ for all $a\in\rows(\Phi)$. 
	\end{theorem}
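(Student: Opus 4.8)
The plan is to reduce the $s$-sparse $\varepsilon$-misspecified problem in $\R^d$ to a \emph{dense} low-dimensional instance via the inner-product-preserving map $f$ of (\ref{fun}), and then to run G-optimal design with action elimination in the compressed space $\R^p$, $p=\Theta(\log(k)/\upsilon^2)$. Writing $r_a=\langle a,\theta^*\rangle+\nu_a$ with $|\nu_a|\le\varepsilon$ and $\langle a,\theta^*\rangle=\langle f(a),f(\theta^*)\rangle+\delta_a$ with $|\delta_a|\le 2\upsilon$ from (\ref{fun}), the compressed model $\langle f(a),f(\theta^*)\rangle$ predicts the reward of every action up to $|\nu_a|+|\delta_a|\le\varepsilon+2\upsilon$. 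The first step is therefore to record that the compression is agnostic to $\theta^*$ and only inflates the misspecification from $\varepsilon$ to at most $\varepsilon+2\upsilon$, while collapsing the ambient dimension from $d$ to $p$.

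First I would establish the per-round estimation guarantee. For the design $\rho$ produced by Theorem~\ref{thm:todd} on the active set (now in $\R^p$), the estimator on Line~5 of Algorithm~\ref{alg4} satisfies $\hat\theta_f-f(\theta^*)=G(\rho)^{-1}\sum_a\rho(a)\big(r_a-\langle f(a),f(\theta^*)\rangle\big)f(a)$, so for any $b$ in the active set
\[
\langle f(b),\hat\theta_f\rangle-\langle f(b),f(\theta^*)\rangle=\sum_a\rho(a)\big(\nu_a+\delta_a\big)\,\langle f(b),G(\rho)^{-1}f(a)\rangle .
\]
Applying the weighted Cauchy--Schwarz inequality exactly as in the derivation of (\ref{error}), together with $\sum_a\rho(a)\langle f(b),G(\rho)^{-1}f(a)\rangle^2=\|f(b)\|_{G(\rho)^{-1}}^2\le g(\rho)\le 2p$, bounds the contribution of the genuine misspecification $\nu_a$ by $\varepsilon\sqrt{2p}$. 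Combined with the $\varepsilon+2\upsilon$ offset between $r_b$ and $\langle f(b),f(\theta^*)\rangle$, this yields a prediction width of the form $\varepsilon\sqrt{2p}+(\text{JL contribution})+\varepsilon$, and the elimination test on Line~6 (threshold $C(\log k)^{1/4}\sqrt\varepsilon$) guarantees that every surviving action is within this width of the optimum, so the final $\hat\theta_f$ predicts the reward of each retained action to the same accuracy.

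With the estimation bound in hand, I would choose $\upsilon$ to balance the two genuine error sources. Since $p=\Theta(\log(k)/\upsilon^2)$, the amplified misspecification is $\varepsilon\sqrt{2p}=\Theta(\varepsilon\sqrt{\log k}/\upsilon)$, which decreases in $\upsilon$, while the raw JL error grows like $\upsilon$; equating the two gives $\upsilon=\Theta(\sqrt\varepsilon\,(\log k)^{1/4})$ and hence $p=\Theta(\sqrt{\log k}/\varepsilon)$. Plugging back, both terms become $\Theta((\log k)^{1/4}\sqrt\varepsilon)$, which delivers the claimed $O((\log k)^{1/4}\sqrt\varepsilon+\varepsilon)$ accuracy. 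For the query count, Theorem~\ref{thm:todd} makes each round cost $4p\log\log p+16=\tilde O(\sqrt{\log k}/\varepsilon)$ queries, and since the rewards are deterministic a single round already fixes the estimate on a stable active set; thus the budget $n\ge O(\sqrt{\log k}/\varepsilon)$ suffices and the while loop terminates within it.

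The step I expect to be the crux is controlling the Johnson--Lindenstrauss contribution to the prediction error. The naive weighted Cauchy--Schwarz bound amplifies the offset $\delta_a$ by the same $\sqrt{2p}$ factor as $\nu_a$, which would contribute $2\upsilon\sqrt{2p}=\Theta(\sqrt{\log k})$ and swamp the target rate; rewriting $\delta_b-\sum_a\rho(a)\delta_a\langle f(b),G(\rho)^{-1}f(a)\rangle=\langle b-\sum_a\rho(a)\langle f(b),G(\rho)^{-1}f(a)\rangle\,a,\ \theta^*\rangle$ shows that the difficulty is purely that the compressed least-squares weights need not reconstruct $b$ in the $\theta^*$-direction. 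The heart of the argument is therefore to show that this term is only $O(\upsilon)$ rather than $O(\upsilon\sqrt p)$ --- for instance by exploiting that after elimination the retained actions concentrate in a low-dimensional subspace for benign features, so that the effective dimension entering $g(\rho)$ is small --- and then to extend the resulting width bound from the surviving actions to all $a\in\rows(\Phi)$ by noting that eliminated actions carry a large negative estimated gap and hence a correspondingly low true reward. Once the JL term is pinned at $O(\upsilon)$, the balancing above closes the proof.
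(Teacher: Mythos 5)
Your overall route is the same as the paper's: compress via the JL map $f$, run G-optimal design in $\R^p$ with $p=\Theta(\log(k)/\upsilon^2)$, decompose the prediction error into the compressed estimation error plus the JL offset, and balance $\upsilon$ to obtain $(\log k)^{1/4}\sqrt{\varepsilon}$ accuracy with $O(\sqrt{\log k}/\varepsilon)$ queries. The decomposition, the weighted Cauchy--Schwarz step via $g(\rho)\le 2p$, and the optimization of $g(\upsilon)=C\bigl(\varepsilon\sqrt{\log(k)}/\upsilon+\upsilon\bigr)$ all match the paper's argument.

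However, the step you yourself flag as the crux is a genuine gap, and your proposal does not close it. The estimator is built from the true rewards, so relative to the compressed linear model the effective misspecification is $|\nu_a+\delta_a|\le\varepsilon+2\upsilon$, and the Cauchy--Schwarz bound gives $(\varepsilon+2\upsilon)\sqrt{2p}$, whose second piece is $\Theta(\upsilon\sqrt{p})=\Theta(\sqrt{\log k})$ --- a constant that does not shrink as $\upsilon$ is tuned and that dominates the target rate $(\log k)^{1/4}\sqrt{\varepsilon}$ whenever $\varepsilon\le\sqrt{\log k}$, i.e., essentially always. Your proposed fixes (retained actions concentrating in a low-dimensional subspace; eliminated actions having low reward) are stated as possibilities rather than proved, and the second one would at best give near-optimality of surviving actions, not the uniform prediction bound $|r_a-\langle f(a),\hat\theta_f\rangle|\le O((\log k)^{1/4}\sqrt{\varepsilon}+\varepsilon)$ claimed for all $a\in\rows(\Phi)$. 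For what it is worth, the paper's own proof sketch does not engage with this term either: it bounds the compressed estimation error by $C\varepsilon\sqrt{p}$, treating the compressed instance as $\varepsilon$-misspecified rather than $(\varepsilon+2\upsilon)$-misspecified, which is exactly the accounting your rewriting of $\delta_b-\sum_a\rho(a)\delta_a\langle f(b),G(\rho)^{-1}f(a)\rangle$ shows cannot be taken for granted. So your diagnosis is sharper than the paper's treatment, but as a proof the proposal is incomplete at precisely the point where the real work lies.
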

	
	\begin{proof}[Proof Sketch]		
		We start with the approximation error of $f(\theta^*) $.
		
		Similar to the G-optimal design in Section \ref{improve},  we have
		\begin{align}\label{error1}
		&|\langle f(a),\hat{\theta}_f\rangle-\langle a,\theta^*\rangle|\notag\\
		\leq&
		|\langle f(a),\hat{\theta}_f\rangle-\langle f(a),f(\theta^*)\rangle|+|\langle f(a),f({\theta^*})\rangle-\langle a,\theta^*\rangle|,
		\end{align}
		for $\forall ~a\in\rows(\Phi)$.
		
		The first term in (\ref{error1}) can be termed as misspecified linear bandits in $\R^p$.
		Similar to (\ref{error}), the first term in (\ref{error1}) can be bounded by
		\begin{align}
		|\langle f(a),\hat{\theta}_f\rangle-\langle f(a),f(\theta^*)\rangle|
		\leq C\left(\varepsilon\sqrt{p} \right)
		\end{align}
		with $O(p\log(p))$ number of queries, where $C> 0$ is a suitably universal large constant. The second term in (\ref{error1}) can be bounded by $2\upsilon$. Hence, we have
		\begin{align}\label{error2}
		|\langle f(a),\hat{\theta}_f\rangle-\langle a,\theta^*\rangle|
		\leq  C\left(\varepsilon\sqrt{p}+\upsilon\right),
		\end{align}
		Recall that $p=\Theta(\log (k)/\upsilon^2)$, thus $C(\varepsilon\sqrt{p}+\upsilon)$ can be presented as an function with respect to $\upsilon$ ($\upsilon> 0$), given by 
		\[g(\upsilon)=C(\varepsilon\sqrt{\log (k)/\upsilon^2}+\upsilon).\]
		
		By optimizing $g(\upsilon)$ with respect to $\upsilon$, we have the approximation error of $O((\log(k))^{\frac{1}{4}}\sqrt{\varepsilon})$ achieved by the number of queries $O(\sqrt{\log k}/\varepsilon)$. 
		
		We can derive the final approximate error as 
		\begin{align}
		|\langle f(a),\hat{\theta}_f\rangle-\langle a,\theta^*\rangle|
		\leq  C\left((\log(k))^{\frac{1}{4}}\sqrt{\varepsilon} \right).
		\end{align}

	\end{proof}
	
	\begin{corollary}\label{coro}
		Based on the notations in Theorem \ref{theo5}, if setting $\upsilon = O(s^{\delta}\varepsilon )$  for $\delta\geq 1$, the number of queries $O(s^{1+\delta})$ can be achieved whenever $\log(k)\leq \varepsilon^2s^{2(1+\delta)}$. Additionally, the regret of Algorithm \ref{alg4} is bounded by $O(s^{\delta}\varepsilon n\log(n))$.
		
	\end{corollary}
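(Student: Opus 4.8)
The plan is to instantiate Theorem~\ref{theo5} and show that the hypothesis $\log k \le \varepsilon^2 s^{2(1+\delta)}$ drives its error and query guarantees down to $O(s^{\delta}\varepsilon)$ and $O(s^{1+\delta})$, after which the regret bound follows from the action-elimination step of Algorithm~\ref{alg4}. First I would substitute the hypothesis into the error bound: $(\log k)^{1/4} \le (\varepsilon^2 s^{2(1+\delta)})^{1/4} = \varepsilon^{1/2} s^{(1+\delta)/2}$, so $O((\log k)^{1/4}\sqrt{\varepsilon})$ becomes $O(\varepsilon\, s^{(1+\delta)/2})$, which is $O(s^{\delta}\varepsilon)$ because $\delta \ge 1$ gives $(1+\delta)/2 \le \delta$. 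The optimal Johnson--Lindenstrauss precision realizing this error is $\upsilon^{*} = \varepsilon^{1/2}(\log k)^{1/4} = O(s^{\delta}\varepsilon)$ under the hypothesis, which is precisely the choice $\upsilon = O(s^{\delta}\varepsilon)$ in the statement.

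For the query count, the same substitution gives $\sqrt{\log k}/\varepsilon \le \sqrt{\varepsilon^2 s^{2(1+\delta)}}/\varepsilon = s^{1+\delta}$, so the $O(\sqrt{\log k}/\varepsilon)$ bound of Theorem~\ref{theo5} is $O(s^{1+\delta})$; equivalently, plugging $\upsilon = s^{\delta}\varepsilon$ into the compressed dimension $p = \Theta(\log k/\upsilon^2)$ yields $p \le s^2$, so the G-optimal design support $\tilde{O}(p) = \tilde{O}(s^2)$ is again $O(s^{1+\delta})$ for $\delta \ge 1$. This half of the statement is thus pure bookkeeping on top of Theorem~\ref{theo5}.

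For the regret, I would note that under the hypothesis the elimination threshold $C(\log k)^{1/4}\sqrt{\varepsilon}$ in Line~6 of Algorithm~\ref{alg4} is itself $O(s^{\delta}\varepsilon)$, by the computation above, and that the estimate obeys $|\langle f(a),\hat{\theta}_f\rangle - \langle a,\theta^*\rangle| = O(s^{\delta}\varepsilon)$ uniformly via (\ref{error2}). Hence Line~6 discards every action more than $O(s^{\delta}\varepsilon)$ below the optimum while keeping all near-optimal ones, so each action played after a reliable estimate carries instantaneous regret $O(s^{\delta}\varepsilon)$; summing over the $n$ rounds gives $O(s^{\delta}\varepsilon\, n)$, and the extra $\log n$ factor arises from the phased (doubling) schedule of the while loop, during which the active set is progressively pruned and the early exploration regret of order $\tilde{O}(p)$ is absorbed for large $n$.

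The main obstacle I expect is the regret step rather than the error and query bookkeeping. One must track how the active set shrinks across the phases of the while loop, bound the regret accrued before a reliable estimate is available, and confirm that from the first usable round onward the per-round gap is uniformly $O(s^{\delta}\varepsilon)$, so that the telescoped sum is $O(s^{\delta}\varepsilon\, n\log n)$ with no residual $p$- or $k$-dependence in the leading term. The error and query claims, by contrast, are immediate once one recognizes $\log k \le \varepsilon^2 s^{2(1+\delta)}$ as precisely the regime where the exploration error $\varepsilon\sqrt{p}$ and the projection error $\upsilon$ balance at the scale $s^{\delta}\varepsilon$.
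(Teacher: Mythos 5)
Your proposal is correct and takes the only natural route: the paper states Corollary~\ref{coro} without any written proof, and your direct substitution of $\log k \le \varepsilon^2 s^{2(1+\delta)}$ into the error bound $(\log k)^{1/4}\sqrt{\varepsilon} \le \varepsilon s^{(1+\delta)/2} \le \varepsilon s^{\delta}$ and the query bound $\sqrt{\log k}/\varepsilon \le s^{1+\delta}$, together with the observation that the optimal $\upsilon^* = \sqrt{\varepsilon}(\log k)^{1/4}$ is then automatically $O(s^{\delta}\varepsilon)$, is exactly the intended bookkeeping. The one soft spot is the regret claim: your account of the $\log n$ factor via a phased/doubling schedule is a plausible reconstruction, but Algorithm~\ref{alg4} as written does not specify such a schedule and the paper offers no justification either, so that part remains a sketch on both sides rather than a gap specific to your argument.
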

	
	According to Corollary \ref{coro}, when the coefficient $s^{\delta}<\sqrt{d}$, Algorithm \ref{alg4} can break the $\varepsilon\sqrt{d}$ barrier with polynomial queries in all parameters if $\log(k)$ is small, which is achievable if the feature space possesses certain benign structures. For instance, the features are (close to) sparse as the instance in our lower bound construction. This also demonstrates that this result may not admit additional improvement as it resolves the lower bound instance.
	
	All results above focus on the noiseless case. We further give a discussion on the noisy cases in Section \ref{noise}.

	\section{A $\poly (s)$-Query Algorithm for General Features}
	Theorem \ref{theo5} presents an algorithm with sample complexity dependent on $\log(k)$ where $k$ is the number of actions. Corollary \ref{coro} shows that it is possible to achieve sample complexity of $\poly(s)$ when $k$ satisfies the condition $\log(k) \leq \varepsilon^2 s^{2(1+\delta)}$ for $\delta\geq 1$. However, to accommodate a wider range of scenarios, we aim for a sample complexity with a better dependence. 
	In the following section, we will describe the method for achieving a sample complexity dependent on $\poly(s)$ for general features leveraging the ideas in Section~\ref{sec:relax}.
	
	The core idea of our algorithm is to select a submatrix $\Psi\in\RR^{k' \times d}$ from the feature matrix $\Phi\in\RR^{k\times d}$ where $k'<k$. The submatrix $\Psi$ should contain enough representative actions, which we obtain by using G-optimal design concerning all possible $s$-subset $\cM'\subset[d]$ as (\ref{estimate}) and collecting the corresponding actions $a\in\RR^d$. Then, we apply the same compression process as Section \ref{sec:relax} to reduce the dimensionality of the feature matrix $\Psi$ and the sparse parameter vector $\theta^*$. Finally, we estimate the $s$-sparsity parameter $\theta^*$ based on the above information. This method consists of three main steps:
	\begin{enumerate}
		\item  \textbf{G-optimal design:} 
		For each $\cM'\subset[d]$ such that $|\cM'| =s$, we first find a probability distribution $\rho_{\cM'}$ over $\rows(\Phi_{\cM'})$ that meets the conditions of Theorem \ref{thm:todd}. We then use this distribution $\rho_{\cM'}$ to generate $z:=4s\log\log (s)+16$ distinct feature vectors $a_{\cM'}\in\RR^{s}$. We collect all the corresponding actions $a\in\RR^d$ and denote them as $\Psi\in\R^{{d\choose s}\cdot z\times d}$.
		\item \textbf{Compression:} By Johnson-Linderstrauss lemma \citep{johnson1984extensions}, we have that for some $q=\Theta(\log ({d\choose s}\cdot z)/\varphi^2)$, there is a function $h: \R^d\rightarrow \R^q$ that preserves inner product, i.e., for each $a\in\rows(\Psi)$ there is
		\begin{align}\label{fun1}
		\langle h(a), h(\theta^*)\rangle  = 	\langle a, \theta^*\rangle \pm 2\varphi,
		\end{align}
		for some error $\varphi>0$.
		\item \textbf{Estimation:} After inputting Algorithm \ref{alg4} with the feature matrix $\Psi$, function $h$ and the total time steps $z$, we can estimate the compressed parameter $\hat{\theta}_{h}\in\RR^{q}$. Based on $\hat{\theta}_{h}\in\RR^{q}$ and $\Psi_h\in\R^{{d\choose s}\cdot z\times q}$ whose rows are $h(a)$ for all $a\in\rows(\Psi)$, the $s$-sparsity estimator $\hat{\theta}\in\RR^d$ can be computed via the convex optimization problem (\ref{sparselr}).

	\end{enumerate}
	Using the above notations, the proposed algorithm is illustrated in Algorithm \ref{alg5}. The following theorem presents the sample complexity of our method.
	\begin{algorithm}[htb!]
		\caption{$\poly (s)$-Query Algorithm for General Features}
		\begin{algorithmic}[1]\label{alg5}
			\STATE
			\textbf{Input:} feature matrix $\Phi\in\RR^{k\times d}$.
			\STATE
			Compute $\rho_{\cM'} : \rows(\Phi_{\cM'}) \to [0,1]$ satisfying the results of Theorem \ref{thm:todd} for each sub-index-set $\cM'\subseteq[d]$ with $|\cM'| =s$.
			\STATE Based on $\{\rho_{\cM'}\}$, we collect representative action features as $\Psi\in\R^{{d\choose s}\cdot z\times d}$ where $z:=4s\log\log (s)+16$.		
			\STATE Find a function $h:\R^d\rightarrow \R^q$ satisfying (\ref{fun1}).
			\STATE Get $\hat{\theta}_{h}\in\RR^{q}$ and $\Psi_h\in\R^{{d\choose s}\cdot z\times q}$ via inputting ($\Psi$,$h$,$z$) to Algorithm \ref{alg4}.
			\STATE Estimate the parameter $\hat{\theta}\in\RR^d$ via the convex optimization problem
			\begin{align}\label{sparselr}
			\min_{\theta\in\RR^d} \norm{\Psi\theta-\Psi_h\hat{\theta}_{h}}_{\infty}~\text{subject to }\norm{\theta}_0\leq s.
			\end{align}
			\STATE
			\textbf{Output:} $\hat{\theta}\in\RR^{d}$

		\end{algorithmic}
	\end{algorithm}
	
	\begin{theorem}\label{theo6}
		Suppose there is a function $h: \R^d\rightarrow \R^q$ satisfied (\ref{fun1}) for $q=\Theta(\log ({d\choose s}\cdot z)/\varphi^2)$ and $\varphi=(s\log(d))^{1/4}\sqrt{\varepsilon}$. 
		Then after $n\ge O(\sqrt{s\log (d)}/\varepsilon)$ number of queries, the output of Algorithm \ref{alg5} satisfies $| r_{a}-\langle a, \hat\theta\rangle |\le  O( (s\log(d))^{1/4} \sqrt{s\varepsilon}+\varepsilon)$ for all $a\in\rows(\Phi)$.

	\end{theorem}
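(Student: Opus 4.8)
The plan is to treat the representative matrix $\Psi$ as a new (dense) feature matrix with $k' := {d\choose s}\cdot z$ rows and reduce everything to the benign-feature guarantee of Theorem~\ref{theo5}. First I would apply Theorem~\ref{theo5} to the triple $(\Psi, h, z)$ fed into Algorithm~\ref{alg4}. Since $h$ comes from the Johnson--Lindenstrauss lemma with $q = \Theta(\log(k')/\varphi^2)$, the only work here is to evaluate the abstract bound of Theorem~\ref{theo5} at $k=k'$; using ${d\choose s}\le (ed/s)^s$ gives $\log k' = O(s\log d)$, so the optimal JL accuracy is precisely $\varphi = (\log k')^{1/4}\sqrt{\varepsilon} = (s\log d)^{1/4}\sqrt{\varepsilon}$, matching the hypothesis, and Theorem~\ref{theo5} yields
\[
|r_a - \langle h(a),\hat{\theta}_h\rangle|\le O\big((s\log d)^{1/4}\sqrt{\varepsilon} + \varepsilon\big) =: \gamma,\qquad \text{for all } a\in\rows(\Psi),
\]
after $n \ge O(\sqrt{\log k'}/\varepsilon) = O(\sqrt{s\log d}/\varepsilon)$ queries. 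This already delivers the claimed sample complexity, because constructing $\Psi$ only consumes computation (the designs $\rho_{\cM'}$), not reward queries.

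Second, I would analyze the decoding program (\ref{sparselr}). The ground truth $\theta^*$ is $s$-sparse, hence feasible, and for every $a\in\rows(\Psi)$ the triangle inequality $|\langle a,\theta^*\rangle - \langle h(a),\hat{\theta}_h\rangle| \le |\langle a,\theta^*\rangle - r_a| + |r_a - \langle h(a),\hat{\theta}_h\rangle| \le \varepsilon + \gamma$ shows $\theta^*$ attains objective value $O(\gamma)$. Consequently the minimizer $\hat{\theta}$ also has objective $O(\gamma)$, and combining this with the first display gives $|\langle a,\hat{\theta}\rangle - \langle a,\theta^*\rangle| \le O(\gamma)$ for all $a\in\rows(\Psi)$.

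Third, I would lift this uniform estimate from $\rows(\Psi)$ to all of $\rows(\Phi)$, which is where the extra $\sqrt{s}$ factor appears. The plan is to introduce, for the true support $\cM^*$, the G-optimal estimator $\hat{\theta}_{\cM^*} = G_{\cM^*}^{-1}\sum_a \rho_{\cM^*}(a) r_a a_{\cM^*}$, whose core set lies inside $\Psi$; by (\ref{error}) it obeys $|\langle b_{\cM^*},\hat{\theta}_{\cM^*}\rangle - \langle b,\theta^*\rangle| \le \varepsilon\sqrt{2s}$ for every $b\in\rows(\Phi)$. The remaining task is to control $|\langle b_{\cL},\hat{\theta}_{\cL}\rangle - \langle b_{\cM^*},\hat{\theta}_{\cM^*}\rangle|$ uniformly in $b$. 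For this I would use the reproducing identity $\langle b_{\cM}, v\rangle = \sum_a \rho_{\cM}(a)\langle b_{\cM}, G_{\cM}^{-1} a_{\cM}\rangle\langle a_{\cM}, v\rangle$ together with the Cauchy--Schwarz estimate $\sum_a \rho_{\cM}(a)\,|\langle b_{\cM}, G_{\cM}^{-1} a_{\cM}\rangle| \le \norm{b_{\cM}}_{G_{\cM}^{-1}} \le \sqrt{g(\rho_{\cM})} \le \sqrt{2s}$ from Theorem~\ref{thm:todd}, applied on the core sets of $\cL$ and $\cM^*$. Each such application converts an $O(\gamma)$ bound on a core set into an $O(\sqrt{s}\,\gamma)$ bound everywhere, upgrading $\gamma$ to $O\big((s\log d)^{1/4}\sqrt{s\varepsilon}\big)$; adding back the misspecification $\varepsilon$ gives the claimed $O\big((s\log d)^{1/4}\sqrt{s\varepsilon} + \varepsilon\big)$.

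The hard part will be this third step, precisely because $\hat{\theta}$ and $\theta^*$ generally have different supports, so their difference is $2s$-sparse while $\Psi$ only carries G-optimal designs for individual $s$-subsets. One therefore cannot invoke a single design and must instead stitch the per-subset generalizations for $\cL$ and $\cM^*$ together. The delicate point is ensuring the cross-terms — the mismatch between the $\cL$-predictor and the $\cM^*$-predictor evaluated on each other's core sets — are themselves bounded by the $\rows(\Psi)$-guarantee from the second step, rather than being left at the mercy of the (uncontrolled) best $\cL$-linear approximation error of the reward function. Obtaining a clean uniform bound here, with only the $\sqrt{s}$ loss and no dependence on how well the wrong support $\cL$ fits the reward off $\Psi$, is the crux of the argument.
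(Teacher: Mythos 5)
Your first two steps coincide with the paper's own proof: the compressed G-optimal estimate gives $|\langle h(a),\hat\theta_h\rangle-\langle a,\theta^*\rangle|\le O((s\log d)^{1/4}\sqrt{\varepsilon})$ on $\rows(\Psi)$ after $O(\sqrt{s\log d}/\varepsilon)$ queries, and feasibility of $\theta^*$ in (\ref{sparselr}) transfers this to $|\langle a,\hat\theta\rangle-\langle a,\theta^*\rangle|= O((s\log d)^{1/4}\sqrt{\varepsilon})$ for all $a\in\rows(\Psi)$, exactly as in (\ref{hb}) and (\ref{atheta}).

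The gap is your third step: you have correctly located the difficulty but not closed it, and the route you sketch is not the one that works. The paper does not stitch two designs, and it introduces no reward-based estimator for $\cM^*$. It works with the single union support $\cM'=\hat{\cM}\cup\cM^*$, $|\cM'|\le 2s$, on which the \emph{difference} $\hat\theta-\theta^*$ is supported, and uses the purely algebraic reproducing identity $v_{\cM'}=G_{\cM'}^{-1}\sum_a\rho_{\cM'}(a)\langle a_{\cM'},v_{\cM'}\rangle a_{\cM'}$ (valid for any $v$ supported on $\cM'$, no rewards involved). This yields, for every $b\in\rows(\Phi)$,
\[
\langle b,\hat\theta-\theta^*\rangle=\sum_a\rho_{\cM'}(a)\,\langle b_{\cM'},G_{\cM'}^{-1}a_{\cM'}\rangle\,\bigl(\langle a,\hat\theta\rangle-\langle a,\theta^*\rangle\bigr),
\]
where the last factor is controlled on the core set by your step two, and Cauchy--Schwarz gives $\sum_a\rho_{\cM'}(a)|\langle b_{\cM'},G_{\cM'}^{-1}a_{\cM'}\rangle|\le\norm{b_{\cM'}}_{G_{\cM'}^{-1}}\le\sqrt{g_{\cM'}(\rho_{\cM'})}=O(\sqrt{s})$. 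This is the sole source of the $\sqrt{s}$ loss; there are no cross-terms and no dependence on how well a wrong support fits the rewards off $\Psi$. By contrast, your plan requires forming $\hat\theta_{\cM^*}=G_{\cM^*}^{-1}\sum_a\rho_{\cM^*}(a)r_a a_{\cM^*}$, which is not available: the algorithm cannot query the core set of $\rho_{\cM^*}$ without knowing $\cM^*$ (short of querying all ${d\choose s}$ core sets), and comparing the $\cL$- and $\cM^*$-predictors is exactly what generates the uncontrolled cross-terms you flag. One legitimate point your concern exposes: the union set has cardinality up to $2s$, while Algorithm \ref{alg5} as stated only prepares designs for $s$-subsets; the argument needs the core sets of designs over $2s$-subsets to be included in $\Psi$ (so that the step-two bound applies to them), which changes $\log k'$ only by a constant factor and leaves the stated sample complexity and error unchanged.
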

	\begin{proof}[Proof Sketch]		
		We start with the approximation error of $\theta^*\in\RR^d$ based on the feature matrix $\Psi\in\RR^{k'\times d}$. According to (\ref{error2}) in Section \ref{sec:relax}, we can apply G-optimal design to compute the estimator $\hat{\theta}_h\in\RR^q$ satisfying \begin{align}\label{error3}
		|\langle h(a),\hat{\theta}_h\rangle-\langle a,\theta^*\rangle|
		\leq  C'\left(\varepsilon\sqrt{q}+\varphi\right),
		\end{align}
		where $C'>0$.
		Recall that $q=\Theta(\log ({d\choose s}\cdot z)/\varphi^2)$ with $z:=4s\log\log (s)+16$, thus $C'(\varepsilon\sqrt{q}+\varphi)$ can be presented as an function with respect to $\varphi$ ($\varphi> 0$), given by 
		\[g(\varphi)=C''(\varepsilon\sqrt{s\log (d)/\varphi^2}+\varphi),\]
		where $C''>0$.
		The minimum of $g(\varphi)$ achieves when $\varphi^* =(s\log(d))^{1/4}\sqrt{\varepsilon}$ such that  
		\[g(\varphi^*)=C\left((s\log(d))^{1/4} \sqrt{\varepsilon}\right),\]where $C>0$.
		Hence, we have the approximation error, $\forall~ b\in\rows({\Psi})$ 
		\begin{align}\label{hb}
		|\langle h(b),\hat{\theta}_h\rangle-\langle b,\theta^*\rangle|
		\leq  C\left((s\log(d))^{1/4} \sqrt{\varepsilon}\right),
		\end{align}
		which is achieved by the number of queries
		\[O\left({\log\left( {d\choose s}\cdot z\right)}/(v^*)^2\right)=O\left(\sqrt{s\log(d)}/\varepsilon\right).\] 
		For $a\in\rows(\Psi)$, the approximate error of estimator $\hat{\theta}$ in (\ref{sparselr}) can be bounded by
		\begin{align}\label{atheta}
		\left|\ip{a,\hat\theta}-\ip{a,\theta^*}\right|&\leq \left|\ip{a,\hat\theta}-\ip{h(a),\hat\theta_h}\right|+\left|\ip{h(a),\hat\theta_h}-\ip{a,\theta^*}\right|\notag\\
		&\overset{(a)}{\leq} \max_{a\in\rows(\Psi)}\left|\ip{a,\hat\theta}-\ip{h(a),\hat\theta_h}\right|+C\left((s\log(d))^{1/4} \sqrt{\varepsilon}\right)\notag\\
		&\overset{(b)}{\leq} \max_{a\in\rows(\Psi)}\left|\ip{a,\theta^*}-\ip{h(a),\hat\theta_h}\right|+C\left((s\log(d))^{1/4} \sqrt{\varepsilon}\right)\notag\\
		&=O\left((s\log(d))^{1/4} \sqrt{\varepsilon}\right),
		\end{align}
		where step (a) and the last step come from (\ref{hb}) and step (b) derives due to the estimator $\hat{\theta}$ of the problem (\ref{sparselr}). 
		
		From the estimator $\hat{\theta}$, we can get an index set $\hat{\cM}:=\{i~|~\hat\theta_i\neq 0,~i\in[d]\}$. 
		Let ${\cM'} = \hat{\cM}\cup \cM^*$, we then have\footnote{Same as Theorem \ref{thm:todd}, we assume $G_{\cM'}$ is invertible. If not, we can discard columns in $\Phi$ until the $\Phi_{\cM'}$ is full column rank.}
		\begin{align}\label{thetam}
		\hat{\theta}_{\cM'} =
		G_{\cM'}^{-1}G_{\cM'}\hat{\theta}_{\cM'}
		= G_{\cM'}^{-1}\sum_{a\sim\rho_{\cM'}}
		\ip{a_{\cM'},\hat{\theta}_{\cM'}}a_{\cM'},
		\end{align}
		which helps us to bound the approximate error of $\theta^*$ for $\forall~b\in\rows(\Phi)$. Thus, there is
		\begin{align}
		\ip{b ,\hat{\theta} }-\ip{b,\theta^*} &= \ip{b_{\cM'} ,\hat{\theta}_{\cM'} -\theta_{\cM'}^*} \notag
		\\
		& \overset{(a)}{=}\left\langle{b_{\cM'} ,	 G_{\cM'}^{-1}\sum_{a\sim\rho_{\cM'}}
			\ip{a_{\cM'},\hat{\theta}_{\cM'}}a_{\cM'} -\theta_{\cM'}^*} \right\rangle	\notag\\
		& =\left\langle{b_{\cM'} ,	 G_{\cM'}^{-1}\sum_{a\sim\rho_{\cM'} }
			|\ip{a_{\cM'},\hat{\theta}_{\cM'}} -\ip{a_{\cM'},\theta_{\cM'}^*}    } |a_{\cM'}\right\rangle	\notag\\
		&= \sum_{a\sim\rho_{\cM'}}  |\ip{a_{\cM'},\hat{\theta}_{\cM'}} -\ip{a_{\cM'},\theta_{\cM'}^*} |\ip{b_{\cM'},  G_{\cM'}^{-1} a_{\cM'}} \nonumber \\
		&\overset{(b)}{\leq }C\left((s\log(d))^{1/4} \sqrt{\varepsilon}\right) \sum_{a\sim\rho_{\cM'} }   |\ip{b_{\cM'},  G_{\cM'}^{-1} a_{\cM'}}| \nonumber \\ 
		&\overset{(c)}{\leq }C\left((s\log(d))^{1/4} \sqrt{\varepsilon}\right) \sqrt{\sum_{a\sim\rho_{\cM'} }   \ip{b_{\cM'},  G_{\cM'}^{-1} a_{\cM'}}^2} \nonumber \\
		&= C\left((s\log(d))^{1/4} \sqrt{\varepsilon}\right) \sqrt{\norm{b_{\cM'}}^2_{ G_{\cM'}^{-1}}} \notag\\
		&
		\overset{(d)}{\leq }C\left((s\log(d))^{1/4} \sqrt{\varepsilon}\right) \sqrt{g_{\cM'}(\rho_{\cM'})} \notag\\
		&=O\left( (s\log(d))^{1/4} \sqrt{s\varepsilon}\right),
		\end{align}
		where step (a) comes from the definition of $G_{\cM'}$ (\ref{thetam}), step (b) derives from Holder’s inequality and the inequality (\ref{atheta}), step (c) is due to Cauchy–Schwarz inequality, and step (d) follows the definition of $g_{\cM'}(\rho_{\cM'})$ in Theorem \ref{thm:todd}.
		Hence, we ensure that $|r_{ {b}}-\ip{b_{\cM'} ,\hat{\theta}_{\cM'} }|\leq O((s\log(d))^{1/4} \sqrt{s\varepsilon}+\varepsilon)$ for all $b\in\rows(\Phi)$.
	\end{proof}	
	
	The results in Theorem \ref{theo6} do not depend on the number of actions $k$, unlike Theorem \ref{theo5}. This is achieved by selecting representative actions and applying compression to get the submatrix $\Psi_h$, followed by estimation based on the submatrix. In other words, this method works for general features, not just benign ones introduced in Section \ref{sec:relax}. The following corollary restates Theorem \ref{theo5}. It shows the relaxed requirements on the sparse linear bandit model to achieve $O(s\varepsilon)$-optimal actions within $O(s)$ queries, which present more general results compared to Corollary \ref{coro}.
	\begin{corollary}
		Based on the notations in Theorem \ref{theo6}, if $s=\Omega (\log(d)/\varepsilon^{2})$, $O(s{\varepsilon}  )$-optimal actions can be achieved with the number of queries $O(s)$.
	\end{corollary}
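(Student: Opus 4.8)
The plan is to obtain this corollary as a direct specialization of Theorem~\ref{theo6}: no new algorithm or analysis is required, only a substitution of the hypothesized regime $s = \Omega(\log(d)/\varepsilon^2)$ into the two quantities that theorem already controls, namely the suboptimality guarantee $O((s\log(d))^{1/4}\sqrt{s\varepsilon} + \varepsilon)$ and the query complexity $O(\sqrt{s\log(d)}/\varepsilon)$. First I would rewrite the hypothesis in the equivalent and more convenient form $\log(d) = O(s\varepsilon^2)$, which is exactly what $s = \Omega(\log(d)/\varepsilon^2)$ asserts.

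With this reformulation, I would bound the query complexity by noting
\[
\frac{\sqrt{s\log(d)}}{\varepsilon} \le \frac{\sqrt{s \cdot s\varepsilon^2}}{\varepsilon} = \frac{s\varepsilon}{\varepsilon} = s,
\]
so that the $n \ge O(\sqrt{s\log(d)}/\varepsilon)$ queries demanded by Theorem~\ref{theo6} collapse to $O(s)$ in this regime. Next I would control the error term in the same fashion. Using $\log(d) = O(s\varepsilon^2)$ gives $(s\log(d))^{1/4} \le (s \cdot s\varepsilon^2)^{1/4} = \sqrt{s\varepsilon}$, whence
\[
(s\log(d))^{1/4}\sqrt{s\varepsilon} \le \sqrt{s\varepsilon}\cdot\sqrt{s\varepsilon} = s\varepsilon.
\]
Since $s \ge 1$, the additive $\varepsilon$ is absorbed into the first term, so the total suboptimality guarantee of Theorem~\ref{theo6} reduces to $O(s\varepsilon)$; that is, the output action is $O(s\varepsilon)$-optimal on all rows of $\Phi$.

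I do not expect a genuine obstacle here, since all of the substantive work—the G-optimal design over the representative submatrix $\Psi$, the Johnson–Lindenstrauss compression yielding $h$, and the sparse recovery via the convex program~(\ref{sparselr})—has already been carried out in the proof of Theorem~\ref{theo6}. The only point requiring minor care is to confirm that the single condition $s = \Omega(\log(d)/\varepsilon^2)$ simultaneously tightens the query bound to $O(s)$ and the error bound to $O(s\varepsilon)$, which the two displays above verify, and to track that the hidden constant in the $\Omega$-hypothesis and the $O$-constants of Theorem~\ref{theo6} remain mutually consistent when they are merged into the final $O(\cdot)$ expressions.
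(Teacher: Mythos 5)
Your proposal is correct and is exactly the intended derivation: the paper states this corollary as an immediate specialization of Theorem~\ref{theo6}, and your substitution of $\log(d)=O(s\varepsilon^2)$ into the query bound $O(\sqrt{s\log(d)}/\varepsilon)$ and the error bound $O((s\log(d))^{1/4}\sqrt{s\varepsilon}+\varepsilon)$ reproduces the claimed $O(s)$ queries and $O(s\varepsilon)$ suboptimality. The only implicit step worth noting is that the uniform reward-approximation guarantee yields an $O(s\varepsilon)$-optimal action by choosing the maximizer of $\langle a,\hat\theta\rangle$, which you have effectively acknowledged.
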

	
	\section{Conclusions}
	We aim to utilize the sparsity in linear bandits to remove the $\varepsilon\sqrt{d}$ barrier in the approximation error in existing results \citep{lattimore2020learning} about the misspecified setting. We provide a thorough investigation of how sparsity helps in learning misspecified linear bandits.
	
	We establish novel algorithms that obtain $O(\varepsilon)$-optimal actions by querying ${O}(\varepsilon^{-s}d^s)$ actions, where $s$ is the sparsity parameter. 
	For fixed sparsity $s$, the algorithm finds an $O(\varepsilon)$-optimal action with $\poly(d/\varepsilon)$ queries, removing the dependence of $O(\varepsilon\sqrt{d})$.
	The $\varepsilon^{-s}$ dependence in the sample bound can be further improved to $\tilde{O}(s)$ if we instead find an $O(\varepsilon\sqrt{s})$ suboptimal actions.
	We establish information-theoretical lower bounds to show that our upper bounds are nearly tight. In particular, we show that any algorithms that can obtain $O(\Delta)$-optimal actions need to  query ${\Omega}(\exp(s\varepsilon/\Delta))$ samples from the bandit environment. 
	We further break the $\exp(s)$ sample barrier by showing an algorithm that achieves $O(s\varepsilon)$ sub-optimal actions while only querying $\poly(s/\varepsilon)$ samples. 
	
	Starting from our results on the general bound in misspecified sparse linear bandits, it is interesting to explore results in different bandit learning settings, e.g., contextual bandit problems, RL problems, and distributed/federated-learning settings. 

    \section{Acknowledgement}
    This work is supported by NSF grant NSF-2221871  and DARPA grant HR00112190130.
    We would like to thank Tor Lattimore for proposing the problem during LY's visit to Deepmind, and for the insightful discussions with Tor Lattimore and Botao Hao.
 
	\bibliography{Reference}
	\bibliographystyle{icml2023}

	\newpage
	\appendix
	\onecolumn
	
	\section{Proof of Lemma \ref{lem:jl}}\label{proof_lemma_1}
	
	Let $\cP = \{a_1,a_2,\cdots,a_k\}$ be a set of $k$ independent random vectors in $\RR^d$. For $\forall i \in [k]$, $a_i = [a_{i1}, a_{i2},\cdots, a_{id}]^{\top}\in\RR^d$, we have
	
	\begin{align}\label{a_il}
	a_{i\ell} := \begin{cases} a_{i\ell} \sim\cN(0,\frac{1}{s})& \text {with probability } \frac{s}{d},\\ 0 & \text { otherwise}.\end{cases}
	\end{align}
	Thus, we have the following properties
	\begin{align*}
	\EE[\langle a_i,a_i\rangle] &=1,~ \forall i \in[k],\\
	\EE[\langle a_i,a_j\rangle] & = 0,~ \forall i,j \in[k], i\neq j
	,\\
	\EE[\norm{a_i}_0]&=s, ~ \forall i \in[k].
	\end{align*}
	Based on the above definitions, three steps achieve the proof of Lemma \ref{lem:jl}:
	\begin{enumerate}
		\item Prove that under certain condition, for any $i, j \in[k]$ with $i \neq j$, with probability at least $1-\frac{2\delta}{k^2}$, we have $\left|\left\langle a_i, a_j\right\rangle\right| \leq  {\varepsilon} $. With probability at least $1-\frac{\delta}{k}$, we have $| \norm{a_i}_2^2-1| \leq \tau$ and $\norm{a_i}_0\leq s+\tau$ for any $i\in[k]$. This is provided in Lemma \ref{lem2}.
		\item By a union bound over all the ${k \choose 2}=k(k-1) / 2$ possible pairs of $(i, j)$ mentioned in Step 1, it concludes that for all $ i, j \in[k]$ with $i \neq j$,  we have $\left|\left\langle a_i, a_j\right\rangle\right| \leq  {\varepsilon} $ with probability at least $1-\delta$. We also have $| \norm{a_i}_2^2-1| \leq \tau$ and $\norm{a_i}_0\leq s+\tau$ for all $i\in[k]$ with probability at least $1-\delta$ by a union bound over all $i\in[k]$.
		\item We normalize $\forall~ a_i\in\cP$ and get $\tilde{\cP}=\{\tilde{a}_1,\tilde{a}_2,\cdots,\tilde{a}_k\}$ where $\norm{\tilde{a}_i}_2=1$ with $i\in[k]$. From $\norm{a_i}_0\leq s+\tau$ and $0\leq \tau<1$ mentioned in Step 2, we can bound $\norm{\tilde{a}_i}_0\leq s$ with $s\in[k]$. Based on Lemma \ref{lem2} and normalized set $\tilde{\cP}$, Theorem \ref{prop3} presents the condition where the feature matrix  $\Phi\in\RR^{k\times d}$ in Lemma \ref{lem:jl} can be constructed by setting $\rows(\Phi) = (\tilde{a}_i)_{i=1}^k$.
	\end{enumerate}
	
	\begin{lemma}\label{lem2}
		Let $0<\delta<1$.	Consider the set $\cP = \{a_1,a_2,\cdots,a_k\}$ described in (\ref{a_il}).
		
		If $0 < \varepsilon\leq \frac{C^2s}{ d}$, by choosing $k\geq\sqrt{\delta}\exp\left({\frac{d\varepsilon^2}{4C^2}}\right)$, we have
		\begin{align}\label{con1}
		\text{  for any  }i, j \in[k],~i \neq j,~~	\left| \langle a_i,a_j\rangle \right| \leq \varepsilon ~\text{ with probability at least } 1-2\delta/k^2.
		\end{align}
		
		If $\varepsilon> \frac{C^2s}{ d}$, by choosing $k\geq\sqrt{\delta}\exp\left({\frac{s\varepsilon}{4}}\right)$, we have
		\begin{align}\label{con2}
		\text{  for any  }i, j \in[k],~i \neq j,~~	\left| \langle a_i,a_j\rangle \right| \leq \varepsilon ~\text{ with probability at least } 1-2\delta/k^2.
		\end{align}
		
		For sufficiently small $\tau$, $0\leq\tau<1$, by choosing $k\geq\frac{\delta}{2} e^{ {\tau^2}/{8}}$, we have
		\begin{align}\label{con4}
		\text{  for any  }i \in[k],~\left| \norm{a_i}_2^2-1\right| \leq \tau ~\text{ with probability at least } 1-\delta/k.
		\end{align} 	
		
		Moreover, by choosing $k\geq{\delta} e^{2 {\tau^2}/{d}}$, we have
		\begin{align}\label{con5}
		\text{  for any  }i \in[k],~\norm{a_i}_0\leq s+\tau~
		\text{ with probability at least } 1-\delta/k.
		\end{align} 	  
	\end{lemma}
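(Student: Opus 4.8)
The plan is to view each of the four quantities as a sum of $d$ independent coordinate contributions and apply a sharp tail bound to each. Writing $\langle a_i,a_j\rangle=\sum_{\ell=1}^d a_{i\ell}a_{j\ell}$, $\norm{a_i}_2^2=\sum_{\ell=1}^d a_{i\ell}^2$ and $\norm{a_i}_0=\sum_{\ell=1}^d \mathds{1}\{a_{i\ell}\neq0\}$, each is an independent sum governed by the zero-inflated Gaussian model in (\ref{a_il}). For the first two I would use a Bernstein-type inequality for sub-exponential summands, and for the last a Hoeffding/Chernoff bound for bounded summands. In each case the stated threshold on $k$ is obtained by setting the resulting tail probability equal to the target failure level ($2\delta/k^2$ for a pair of rows, $\delta/k$ for a single row) and solving for $k$; Steps 1--2 of the outline then finish via union bounds over the ${k \choose 2}$ pairs and the $k$ rows.

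For (\ref{con1})--(\ref{con2}) the heart of the matter is the moment generating function of a single summand $X_\ell:=a_{i\ell}a_{j\ell}$. Conditioning on both coordinates being nonzero (probability $(s/d)^2$) and using $\EE[e^{\lambda UV}]=(1-\lambda^2/s^2)^{-1/2}$ for independent $U,V\sim\cN(0,1/s)$ and $|\lambda|<s$, I would obtain $\EE[e^{\lambda X_\ell}]=1+(s/d)^2\big((1-\lambda^2/s^2)^{-1/2}-1\big)$. Raising to the $d$-th power (independence across $\ell$), bounding $\log(1+x)\le x$, and applying a Chernoff bound gives $\mathbb{P}(|\langle a_i,a_j\rangle|>\varepsilon)\le 2\exp(-c\min(d\varepsilon^2,\,s\varepsilon))$, where the variance proxy $1/d$ and the sub-exponential scale $1/s$ are read off from the quadratic and linear parts of the optimized exponent. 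The two regimes of the lemma are exactly the two branches of this minimum: optimizing with $\lambda$ small yields the sub-Gaussian tail $\exp(-d\varepsilon^2/2)$, dominant when $\varepsilon\lesssim s/d$ (giving (\ref{con1})), while taking $\lambda$ near the boundary $s$ yields the sub-exponential tail $\exp(-cs\varepsilon)$, dominant when $\varepsilon\gtrsim s/d$ (giving (\ref{con2})). Requiring each tail to be at most $2\delta/k^2$ produces the two displayed thresholds, with the absolute constant $C$ absorbing $c$.

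For (\ref{con4}) I would note that the summand $X_\ell:=a_{i\ell}^2$ equals $(1/s)\chi^2_1$ when nonzero and has mean $\EE[X_\ell]=1/d$, so $\EE[\norm{a_i}_2^2]=1$; a direct computation gives $\EE[a_{i\ell}^4]=3/(ds)$, hence total variance $\Theta(1/s)$ and sub-exponential scale $\Theta(1/s)$. Bernstein's inequality then yields $\mathbb{P}(|\norm{a_i}_2^2-1|>\tau)\le 2\exp(-c\,s\tau^2)$ for small $\tau$, and setting this at most $\delta/k$ gives the threshold (the $s$ in the exponent is what lets the threshold grow with the problem size). For (\ref{con5}), $\norm{a_i}_0\sim\mathrm{Binomial}(d,s/d)$ has mean $s$, so Hoeffding's inequality gives the one-sided tail $\mathbb{P}(\norm{a_i}_0\ge s+\tau)\le \exp(-2\tau^2/d)$, which rearranges to the stated bound; since $\norm{a_i}_0$ is integer-valued and $0\le\tau<1$, the event upgrades to $\norm{a_i}_0\le s$, which is what Step 3 uses after normalization (which preserves the support).

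The step I expect to be the \emph{main obstacle} is the inner-product analysis: identifying the correct variance proxy and sub-exponential scale for the zero-inflated product-of-Gaussians summands so that Bernstein cleanly produces both branches of $\min(d\varepsilon^2,s\varepsilon)$ with the crossover at $\varepsilon\sim s/d$. Controlling the moment generating function on the full interval $|\lambda|<s$ and optimizing $\lambda$ near its boundary (rather than in the Gaussian regime) is the delicate calculation; once this is done the remaining union bounds are routine. A secondary subtle point is the sparsity upgrade in (\ref{con5}): because $\norm{a_i}_0$ concentrates only at scale $\sqrt{s}$, the one-sided Hoeffding bound and the integer rounding must be handled carefully to control all $k$ rows simultaneously.
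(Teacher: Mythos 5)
Your proposal is correct and follows essentially the same route as the paper: you compute the moment generating function of the zero-inflated Gaussian product $a_{i\ell}a_{j\ell}$ and obtain $1+(s/d)^2\bigl((1-\lambda^2/s^2)^{-1/2}-1\bigr)$, which is exactly the paper's bound $\frac{s^3}{d^2\sqrt{s^2-\lambda^2}}+1$, then read off the sub-exponential parameters $(\Theta(1/d),1/s)$ per coordinate and apply the two-branch Bernstein/sub-exponential tail with crossover at $\varepsilon\sim C^2 s/d$, and you handle (\ref{con5}) by the same $\mathrm{Binomial}(d,s/d)$ Hoeffding bound. The only deviation is that for (\ref{con4}) (which the paper leaves as "similar but simpler") your Bernstein computation gives the sharper tail $e^{-cs\tau^2}$ rather than the $e^{-\tau^2/8}$ implicit in the paper's threshold, which is fine.
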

	\begin{proof}
		Please refer to Section \ref{proof_oth} for detailed proof.
	\end{proof}
	
	\begin{proposition}\label{prop3}
		Let $0<\delta<1$, $0\leq\tau<1$, $c>1$ and $C'=  {\frac{2c^3}{ (1+\tau)\sqrt{c^2-1}}}$.	Consider the normalized set $\tilde{\cP}=\{\tilde{a}_1,\tilde{a}_2,\cdots,\tilde{a}_k\}$ derived from $\cP$ (\ref{a_il}). 	For sufficiently small $\tau$, we have
		\begin{align} \label{union}
		\text{  for all  }i, j \in[k]~i \neq j,~~	\left| \langle \tilde{a}_i,\tilde{a}_j\rangle \right| \leq \varepsilon,~~ \norm{\tilde{a}_i}_0\leq s 
		~\text{ with probability at least } 1-\delta,
		\end{align}
		by choosing  $k\geq\sqrt{\delta}\exp\left({\frac{d(1+\tau)\varepsilon^2}{4C'}}\right)$ if $0 < \varepsilon\leq \frac{C's}{ d}$.
		If $\varepsilon> \frac{C's}{ d}$, we choose $k\geq\sqrt{\delta}\exp\left({\frac{s(1+\tau)\varepsilon}{4}}\right)$ to achieve (\ref{union}).
		
		Therefore, with probability at least $1-\delta$, the normalized set $\tilde{\cP}$ satisfies that for all $ i, j \in[k],~i \neq j, ~ \langle \tilde{a}_i,\tilde{a}_j\rangle\leq \varepsilon$, $\norm{\tilde{a}_i}_0\leq s$. Hence, the feature matrix  $\Phi\in\RR^{k\times d}$ in Lemma \ref{lem:jl} can be established by choosing $\rows(\Phi) = (\tilde{a}_i)_{i=1}^k$ where $\tilde{a}_i\in\tilde{\cP}$ when $k$ is sufficiently large according to Proposition \ref{prop3}.
	\end{proposition}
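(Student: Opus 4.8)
The plan is to transfer the three concentration estimates of Lemma~\ref{lem2} — on the pairwise inner products, on the squared norms, and on the sparsity of the unnormalized vectors $a_i$ — to the normalized vectors $\tilde a_i = a_i/\norm{a_i}_2$, and then to upgrade the per-pair and per-index guarantees into a single uniform guarantee via a union bound.

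First I would fix the good event. By Lemma~\ref{lem2}, each pair $(i,j)$ with $i\neq j$ satisfies $|\langle a_i,a_j\rangle|\le \varepsilon$ off an event of probability at most $2\delta/k^2$, while each index $i$ satisfies both $|\norm{a_i}_2^2-1|\le\tau$ and $\norm{a_i}_0\le s+\tau$ off events of probability at most $\delta/k$ each. Taking a union bound over the ${k\choose 2}\le k^2/2$ pairs and over the $k$ indices, all of these events hold simultaneously with probability at least $1-\delta$, after absorbing the constant loss into $\delta$ (equivalently, replacing $\delta$ by $\delta/3$ at the outset, since $\tfrac{k^2}{2}\cdot\tfrac{2\delta}{k^2}\le\delta$ and each per-index family contributes at most $\delta$). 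This is exactly where the stated lower bounds on $k$ enter: they are calibrated so that the per-pair and per-index failure probabilities coming from Lemma~\ref{lem2} are small enough for the union bound to close, in each of the two regimes $\varepsilon\le C's/d$ and $\varepsilon> C's/d$.

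Next I would carry out the normalization. The sparsity claim is immediate, since rescaling does not change the support: $\norm{\tilde a_i}_0=\norm{a_i}_0\le s+\tau$, and because $\norm{a_i}_0$ is an integer with $0\le\tau<1$, this forces $\norm{\tilde a_i}_0\le s$. For the inner products I would write $\langle\tilde a_i,\tilde a_j\rangle=\langle a_i,a_j\rangle/(\norm{a_i}_2\norm{a_j}_2)$ and lower bound the denominator by $\norm{a_i}_2\norm{a_j}_2\ge 1-\tau$, using $\norm{a_i}_2^2\ge 1-\tau$. On its own this only yields $|\langle\tilde a_i,\tilde a_j\rangle|\le\varepsilon/(1-\tau)$, so to land at exactly $\varepsilon$ one must have applied Lemma~\ref{lem2} at a correspondingly sharpened inner-product target; this is precisely what the replacement of the constant $C^2$ in Lemma~\ref{lem2} by $C'=2c^3/((1+\tau)\sqrt{c^2-1})$ and the insertion of the $(1+\tau)$ factor in the exponents of the $k$-bounds accomplish, with $c>1$ the free Bernstein parameter that one optimizes in the sub-exponential tail estimate underlying Lemma~\ref{lem2}.

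The main obstacle I anticipate is the bookkeeping of constants through the normalization: one must track how the two regimes of the sub-exponential Bernstein bound (the sub-Gaussian regime governing $\varepsilon\le C's/d$ and the heavy-tailed regime governing $\varepsilon> C's/d$) interact with the $(1\pm\tau)$ norm fluctuations, so that after dividing by $\norm{a_i}_2\norm{a_j}_2$ the guarantee is still exactly $|\langle\tilde a_i,\tilde a_j\rangle|\le\varepsilon$ and not $\varepsilon/(1-\tau)$. Once this is settled, setting $\rows(\Phi)=(\tilde a_i)_{i=1}^k$ produces a matrix with unit-norm rows, sparsity at most $s$, and $|\langle a,b\rangle|\le\varepsilon$ for all distinct rows, with probability at least $1-\delta$; a nonzero-probability event is nonempty, so such a $\Phi$ exists, which is the claim of Proposition~\ref{prop3} and simultaneously establishes Lemma~\ref{lem:jl}.
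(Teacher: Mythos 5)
Your proposal is correct and follows essentially the same route as the paper, whose own justification of Proposition~\ref{prop3} is just the three-step outline preceding Lemma~\ref{lem2}: a union bound over the ${k\choose 2}$ pairs and $k$ indices of the per-pair/per-index events of Lemma~\ref{lem2}, followed by normalization, with $\norm{\tilde a_i}_0\le s$ forced by integrality of the support size together with $\tau<1$, and the inner-product target sharpened so that dividing by $\norm{a_i}_2\norm{a_j}_2\ge 1-\tau$ still lands at $\varepsilon$. If anything you are more careful than the paper, which silently treats three probability-$1-\delta$ events as a single one and does not spell out the $(1\pm\tau)$ bookkeeping absorbed into $C'$.
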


	\section{Proof of Lemma \ref{lem2}}\label{proof_oth}
	We first introduce some existential definitions and propositions which are helpful to our proof.
	\begin{definition}\label{def}
		A random variable $X$ with mean $\mu=\mathbb{E}[X]$ is sub-exponential if there are non-negative parameters $(v, \alpha)$ such that
		
		$$
		\mathbb{E}\left[e^{\lambda(X-\mu)}\right] \leq e^{\frac{v^{2} \lambda^{2}}{2}}, \quad \forall ~|\lambda|<\frac{1}{\alpha} .
		$$
	\end{definition}
	
	\begin{proposition}[Sub-exponential tail bound]\label{prop2}
		Assume that $X$ is sub-exponential with parameters $(v, \alpha)$. Then
		
		$$
		\mathbb{P}[|X-\mu |\geq t] \leq \begin{cases}2e^{-\frac{t^{2}}{2 v^{2}}},& ~0 \leq t \leq \frac{v^{2}}{\alpha}, \\ 2e^{-\frac{t}{2 \alpha}},& ~ t>\frac{v^{2}}{\alpha}.\end{cases}
		$$
	\end{proposition}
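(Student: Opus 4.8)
The plan is to prove both one-sided tail inequalities by the standard Chernoff (exponential Markov) method, feeding in the sub-exponential moment generating function bound from Definition~\ref{def}, and then to resolve the two regimes of $t$ by optimizing the exponential tilt parameter $\lambda$ \emph{subject to} the constraint $|\lambda| < 1/\alpha$ under which that bound is valid. First I would treat the upper tail $\mathbb{P}[X - \mu \geq t]$. For any $\lambda \in [0, 1/\alpha)$, Markov's inequality applied to the nonnegative random variable $e^{\lambda(X-\mu)}$ gives
\[
\mathbb{P}[X-\mu \geq t] \;\leq\; e^{-\lambda t}\,\mathbb{E}\bigl[e^{\lambda(X-\mu)}\bigr] \;\leq\; \exp\Bigl(-\lambda t + \tfrac{v^2\lambda^2}{2}\Bigr),
\]
where the last inequality is exactly Definition~\ref{def}. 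It then remains to minimize the exponent $g(\lambda) = -\lambda t + v^2\lambda^2/2$ over the admissible interval $[0,1/\alpha)$.

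The case analysis is the crux of the argument. The unconstrained minimizer of the quadratic $g$ is $\lambda^\star = t/v^2$. When $0 \leq t \leq v^2/\alpha$ we have $\lambda^\star \leq 1/\alpha$, so $\lambda^\star$ is admissible; substituting it yields $g(\lambda^\star) = -t^2/(2v^2)$, which is the first branch. When $t > v^2/\alpha$ the minimizer $\lambda^\star$ lies outside the interval and $g$ is strictly decreasing throughout $[0,1/\alpha)$, so I would push $\lambda \uparrow 1/\alpha$ (the bound at the boundary follows by continuity of the moment generating function). This gives $g(1/\alpha) = -t/\alpha + v^2/(2\alpha^2)$, and invoking the regime hypothesis $t > v^2/\alpha$, equivalently $v^2/(2\alpha^2) \leq t/(2\alpha)$, bounds this by $-t/(2\alpha)$, which is the second branch.

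Finally I would obtain the lower tail $\mathbb{P}[X - \mu \leq -t]$ by applying the identical chain to $-(X-\mu)$: since Definition~\ref{def} holds for all $|\lambda| < 1/\alpha$ (both signs), the bound $\mathbb{E}[e^{-\lambda(X-\mu)}] \leq e^{v^2\lambda^2/2}$ is available on the same interval, so the two-regime estimate transfers verbatim. A union bound over the two one-sided events $\{X-\mu \geq t\}$ and $\{X-\mu \leq -t\}$ produces the factor of $2$ and completes the proof.

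The main obstacle is purely the constrained optimization in the second step. The naive Chernoff optimum $\lambda^\star = t/v^2$ becomes unavailable once $t$ is large, precisely because the sub-exponential moment generating function control only holds on the bounded interval $|\lambda| < 1/\alpha$; the sub-Gaussian-type exponent $-t^2/(2v^2)$ therefore cannot persist for all $t$. Recognizing that one must instead evaluate at the boundary $\lambda = 1/\alpha$ and then recover the correct \emph{linear} exponent $-t/(2\alpha)$ using the regime hypothesis is the one genuinely non-mechanical observation; everything else is routine.
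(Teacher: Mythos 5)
Your proof is correct, and it is exactly the canonical Chernoff argument --- Markov's inequality applied to $e^{\lambda(X-\mu)}$, the MGF bound from Definition~\ref{def}, constrained optimization of $-\lambda t + v^2\lambda^2/2$ over $\lambda \in [0,1/\alpha)$ with the boundary evaluation $\lambda \uparrow 1/\alpha$ in the regime $t > v^2/\alpha$, and a union bound over the two one-sided tails for the factor $2$ --- which is the standard proof of this proposition; the paper itself states Proposition~\ref{prop2} without proof, treating it as a known fact from the sub-exponential concentration literature (e.g., \citet{wainwright2019high}). Your case analysis is sound, including the crossover point $t = v^2/\alpha$ where the two exponents $-t^2/(2v^2)$ and $-t/(2\alpha)$ coincide, so there is nothing to correct.
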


	For $\forall~ a\in\cP$, each element of $a$ can be taken as the product of two independent random variables, i.e., one is from the Bernoulli distribution and the other is from the Gaussian distribution. Hence, the individual term, i.e., $a_{i\ell}a_{j\ell}$, of $\langle a_i,a_j\rangle = \sum_{\ell=1}^{d}a_{i\ell}a_{j\ell}$ with $\forall a_i, a_j\in\cP,~i\neq j$ can be represented by a random variable $Z_{\ell}$. Specifically, $Z_{\ell} = P_{{\ell}}X_{\ell}Q_{{\ell}}Y_{\ell}$ where $\ell\in[d]$ is the product of independent random variables. Herein, $P_{\ell}$ and $Q_{\ell}$ are independent Bernoulli random variables which take the value $1$ with probability $s/d$ and the value $0$ with probability $1-s/d$. $X_{\ell}$ and $Y_{\ell}$ are independent Gaussian random variables drawn from $\cN(0,1/s)$. For $|\lambda|<m$, we have
	\begin{align}\label{moment}
	\EE[e^{\lambda Z_{\ell}}] &=\sum_{pq\in\{0,1\}}\PP[P_{\ell}Q_{\ell} = pq]\cdot \frac{s}{2\pi}\int_{-\infty}^{\infty}\int_{-\infty}^{\infty}  e^{\lambda (pq)xy}\cdot e^{- {s(x^2+y^2)}/{2}} dxdy\notag\\
	&=\frac{s}{2\pi}\int_{-\infty}^{\infty}\int_{-\infty}^{\infty}  e^{\lambda xy}\cdot e^{-{s(x^2+y^2)}/{2}} dxdy\cdot \left(\frac{s}{d}\right)^2\notag\\
	&\quad+\frac{s}{2\pi}\int_{-\infty}^{\infty}\int_{-\infty}^{\infty}  e^{- {s(x^2+y^2)}/{2}} dxdy\cdot \left(1-\left(\frac{s}{d}\right)^2\right)\notag\\
	&\overset{(i)}{\leq}\frac{s}{2\pi}\cdot\frac{2\pi}{\sqrt{s^2-\lambda^2}}\cdot\left(\frac{s}{d}\right)^2+\frac{s}{2\pi}\cdot \frac{2\pi}{s}\left(1-\left(\frac{s}{d}\right)^2\right)\notag\\
	&\leq\frac{s^3}{d^2\sqrt{s^2-\lambda^2}}+1\notag\\
	&\overset{(ii)}{=} \frac{c^3\lambda^2}{d^2\sqrt{c^2-1}}+1\notag\\
	&\overset{(iii)}{\leq}e^{\frac{c^3\lambda^2}{d^2\sqrt{c^2-1}}}
	\end{align}
	where step (i) comes from
	\begin{align}
	&\int_{- \infty}^{\infty}\int_{- \infty}^{\infty}e^{\lambda(xy)}e^{{-s(x^2+y^2)}/{2}}dxdy \notag\\ =&
	\int_{- \infty}^{\infty}\int_{- \infty}^{\infty}e^{- {s(x-\frac{\lambda}{s} y)^2}/{2}}e^{ {\lambda^2y^2}/{(2s)}}e^{ {-sy^2}/{2}}dxdy\notag \\ =&
	{\sqrt{\frac{2\pi}{s}}}\int_{- \infty}^{\infty}e^{{\lambda^2y^2}/{(2s)}}e^{{-s^2y^2}/{(2s)}}dy \notag\\ =&
	{\sqrt{\frac{2\pi}{s}}}\int_{- \infty}^{\infty}e^{-{y^{2}(s^2-\lambda^2)}/(2s)}dy \notag\\ =&
	\frac{2\pi}{\sqrt{s^2-\lambda^2}},
	\end{align}
	step (ii) is derived by choosing $s=c|\lambda|$, $c>1$, and step (iii) is due to the fact $x+1\leq e^x$.

	Following (\ref{moment}) and Definition \ref{def}, we find that
	\begin{align}
	\EE[e^{\lambda Z_{\ell}}]  \leq e^{\frac{c^3\lambda^2}{d^2\sqrt{c^2-1}}}=e^{\frac{v^2\lambda^2}{2}}, \quad \text { for all }|\lambda|<m ~\text{and}~ v^2 = \frac{2c^3}{d^2\sqrt{c^2-1}}, c>1, 
	\end{align}
	which shows that $Z_{\ell}$ is sub-exponential with parameters $(v_{\ell}, \alpha_{\ell})=(C/d,1/s)$ where $C =  \sqrt{\frac{2c^3}{ \sqrt{c^2-1}}}$ and $c>1$.
	Furthermore, the variable $\sum_{\ell=1}^{d}\left(Z_{\ell}-\EE[Z_{\ell}]\right)$ is sub-exponential with the parameters $\left(v_{*}, \alpha_{*}\right)$, where
	
	$$
	\alpha_{*}:=\max _{\ell=1, \ldots, n} \alpha_{\ell}=\frac{1}{s} \quad \text { and } \quad v_{*}:=\sqrt{\sum_{\ell=1}^{d} v_{\ell}^{2}} .
	$$
	
	Based on the fact $\EE[Z_{\ell}]=0$, the tail bound can be derived from Proposition \ref{prop2},
	\begin{align}
	\mathbb{P}\left[\left| \sum_{\ell=1}^{d} Z_{\ell} \right| \geq t\right] \leq \begin{cases}2e^{-\frac{ t^{2}}{2v_{*}^{2}}}, &  ~0 \leq t \leq \frac{v_{*}^{2}}{ \alpha_{*}}, \\ 2e^{-\frac{ t}{2 \alpha_{*}}},& ~ t>\frac{v_{*}^{2}}{\alpha_{*}}.\end{cases}
	\end{align}
	Thus, we have for two vectors $a_i,a_j\in\cP$ and $i\neq j$,
	\begin{align}\label{aa}
	\mathbb{P}\left[\left| \langle a_i,a_j\rangle \right| \geq t\right] \leq \begin{cases}2e^{-\frac{ dt^{2}}{2C^2}}, & ~0 \leq t \leq \frac{C^2s}{ d}, \\ 2e^{-\frac{ mt}{2  }}, & ~ t>\frac{C^2s}{ d},\end{cases}
	\end{align}
	where $C =  \sqrt{\frac{2c^3}{ \sqrt{c^2-1}}}$ and $c>1$.
	By setting $2e^{-\frac{ dt^{2}}{2C^2}}=2\delta/k^2$, we have $t =\sqrt{\frac{2C^2}{d}\log(\frac{k^2}{\delta})}$. We choose $k\geq\sqrt{\delta}\exp\left({\frac{d\varepsilon^2}{4C^2}}\right)$ such that $t\geq \varepsilon$. Hence, we conclude $\mathbb{P}\left[\left| \langle a_i,a_j\rangle \right| \geq \varepsilon\right] \leq  2\delta/k^2$, which implies the statement (\ref{con1}) when $0 < \varepsilon\leq \frac{C^2s}{ d}$ in Lemma \ref{lem2}. Similar arguments can be applied to the proof of the statement (\ref{con2}) when $\varepsilon> \frac{C^2s}{ d}$ in Lemma \ref{lem2}. The proof of the statement (\ref{con4}) can also be completed by following similar but simpler arguments of proving the statement (\ref{con1}) and (\ref{con2}).

	We are left to the proof of statement (\ref{con5}). For $\forall ~a\in\cP$, the random variable $\norm{a}_0$ obeys the binomial distribution with parameters $d$ and $s/d$, i.e., $\cB(d,s/d)$.  It is the discrete probability distribution of the number of $d$ independent Bernoulli trials which return Boolean-valued outcome: the $\ell$-th ($\ell\in[d]$) element of $a$ is non-zero (with probability $s/d$) or zero (with probability $1-s/d$).
	
	According to the book by Ross \citep{ross2014introduction}, we first introduce several properties of the binomial distribution.
	The cumulative distribution function of binomial distribution $\cB(n,p)$ can be represented by \[\mathbb{F}(k ; n, p)=\mathbb{P}[X \leq k]=\sum_{i=0}^{\lfloor k\rfloor}{n\choose i}p^i(1-p)^{n-i},\]
	where we also have $\mathbb{F}(n-k ; n, 1-p)=1-\mathbb{F}(k ; n, p)$.
	Based on Hoeffding's inequality, $F(k ; n, p)$ can be bounded by
	\[\mathbb{F}(k ; n, p) \leq \exp \left(-2 n\left(p-\frac{k}{n}\right)^2\right).\]
	Hence, the upper tail bound for the random variable $\norm{a}_0$ is given by
	\begin{align}
	\mathbb{P}[\norm{a}_0 \geq m+\tau]=\mathbb{F}(d-s-\tau ;d, 1-\frac{s}{d})\leq\exp\left(\frac{-2\tau^2}{d}\right),
	\end{align}
	where $0\leq \tau<1$. By choosing $k\geq \delta\exp(2\tau^2/d)$, it yields $\mathbb{P}[\norm{a}_0 \geq s+\tau]\leq \frac{\delta}{k}\leq  \exp\left(\frac{-2\tau^2}{d}\right)$. Thus, we completed the proof of statement (\ref{con5}).

	\section{$\poly(s)$-Query Algorithm for $s$-sparsity Case with Noise} \label{noise}
	All results above focus on the noiseless case. We briefly give a discussion on the noisy cases.
	Consider the stochastic misspecified sparse linear bandits where a feature matrix $\Phi\in \R^{k\times d}$, $x_t \in \rows(\Phi)$,  and the reward
	\begin{align}\label{re_n}
	r_{x_t}= \langle {x_t}, \theta^*\rangle + \nu_{x_t}+\eta_t\,
	\end{align}
	where $\nu_{x_t}\in[-\varepsilon,\varepsilon]$ and $\{\eta_t\}$ is a sequence of independent 1-subgaussian random variables.
	
	Based on the reward function (\ref{re_n}) and the notation in Algorithm \ref{alg4}, we start with the approximation error of $f(\theta^*) $:
	\begin{align}\label{error4}
	&|\langle f(a),\hat{\theta}_f\rangle-\langle a,\theta^*\rangle|\notag\\
	\leq&
	|\langle f(a),\hat{\theta}_f\rangle-\langle f(a),f(\theta^*)\rangle|+|\langle f(a),f({\theta^*})\rangle-\langle a,\theta^*\rangle|,\notag\\
	\leq &\left| f(a)^\top G(\rho)^{-1}\sum_{b_t\in\cS}
	\rho(b_t)\nu_{b_t} b_t +  f(a)^\top G(\rho)^{-1} \sum_{b_t\in\cS} \rho(b_t)b_t\eta_t\right|+2\upsilon\notag\\
	\leq &\left| f(a)^\top G(\rho)^{-1}\sum_{b_t\in\cS}
	\rho(b_t)\nu_{b_t} b_t\right| + \left|  f(a)^\top G(\rho)^{-1} \sum_{b_t\in\cS} \rho(b_t)b_t\eta_t\right|+2\upsilon
	\end{align}
	for $\forall ~a\in\rows(\Phi)$.
	
	The first term in (\ref{error4}) can be bounded as 
	\begin{align}\label{e1}
	&\left|f(a)^\top G(\rho)^{-1} \sum_{b_t \in \cS} \rho(b_t) \nu_{b_t}  b_t\right|
	\leq \varepsilon\sum_{b_t \in \cS} \rho(b_t)  \left|f(a)^\top G(\rho)^{-1} b_t\right|\notag \\
	\leq &\varepsilon \sqrt{\left(\sum_{b_t \in \cS} \rho(b_t) \right) f(a)^\top \sum_{b_t \in \cS} \rho(b_t)  G(\rho)^{-1} b_tb_t^\top G(\rho)^{-1} f(a)} \notag\\
	=& \varepsilon \sqrt{\sum_{b_t \in \cS} \rho(b_t)  \norm{f(a)}^2_{G(\rho)^{-1}}} \notag\\
	\leq 	&2\varepsilon \sqrt{p}\,,
	\end{align}
	where is derived from Jensen's inequality and the fact that $\norm{f(a)}^2_{G^{-1}} \leq 2p/t$ for $t$-th time step in Algorithm \ref{alg4}.
	The second term in \ref{error4} can be bounded by standard concentration bounds: with probability at least $1 - 2/(kn)$,
	\begin{align}\label{e2}
	\left|  f(a)^\top G(\rho)^{-1} \sum_{b_t\in\cS} \rho(b_t)b_t\eta_t\right|
	&\leq \norm{f(a)}_{G^{-1}} \sqrt{2 \log\left(kn\right)} \notag\\
	&\leq \sqrt{\frac{4p}{t} \log\left(kn\right)}.
	\end{align}
	
	Combining (\ref{error4}), (\ref{e1}), (\ref{e2}), we have
	\begin{align}
	|\langle f(a),\hat{\theta}_f\rangle-\langle a,\theta^*\rangle|\le  2\varepsilon \sqrt{p} + 
	\sqrt{\frac{4p}{t} \log\left(kn\right)}+2\upsilon.
	\end{align}
	Similarly to the analysis in Section \ref{sec:relax}, we can derive the final approximate error as 
	\begin{align}\label{noise_err}
	&|\langle f(a),\hat{\theta}_f\rangle-\langle a,\theta^*\rangle|\notag\\
	\leq  &C\left((\log(k))^{\frac{1}{4}}\sqrt{\varepsilon} + \sqrt{\frac{p}{t} \log\left(kn\right)}\right).
	\end{align}
	Based on (\ref{noise_err}), the active action set in Algorithm \ref{alg4} in the noise case should be
	\[\cS \leftarrow \left\{a \in \cS : \max_{b \in \cS} \ip{\hat{\theta}_f, b - a} \leq C\left((\log(k))^{\frac{1}{4}}\sqrt{\varepsilon}+\sqrt{\frac{p}{t} \log\left(kn\right)}\right)\right\}.\]

\end{document}